\def\BibTeX{{\rm B\kern-.05em{\sc i\kern-.025em b}\kern-.08em
    T\kern-.1667em\lower.7ex\hbox{E}\kern-.125emX}}
\newtheorem{theorem}{Theorem}
\newtheorem{lemma}{Lemma}
\newtheorem{assumption}{Assumption}
\newcommand\Tstrut{\rule{0pt}{2.6ex}}
\newcommand{\Yd}{\mathbf{Y}^{(d)}}
\newcommand{\yd}{\mathbf{y}^{(d)}}
\newcommand{\ydd}{\mathbf{y}'^{(d)}}
\newcommand{\Ydd}{\mathbf{Y}'^{(d)}}
\newcommand{\Y}{\mathbf{Y}}
\newcommand{\y}{\mathbf{y}}
\newcommand{\Ydash}{\mathbf{Y}'}
\newcommand{\X}{\mathbf{X}}
\newcommand{\Xdash}{\mathbf{X}'}
\newcommand{\Hhat}{\hat{H}}
\newcommand{\Pmax}{|P|_{max}}
\newcommand{\Pmaxx}{(1+d(\Pmax+1))}
\newcommand{\phat}{\hat{p}}
\newcommand{\prob}{\mathbf{P}}
\newcommand{\E}{\mathbb{E}}
\newcommand{\w}{\mathbf{w}}
\newcommand{\z}{\mathbf{z}}
\newcommand{\tm}{\mathcal{P}}
\newcommand{\eo}{\mathbf{e}_{0}}
\newcommand{\Mbar}{\bar{M}}
\newcommand{\init}{\Yd(0)=\yd_0, \Ydd(0) = \ydd_0}
\newcommand{\0}{\mathbb{\tau_0}}
\begin{document}
\title{Learning the Influence Graph of a High-Dimensional Markov Process with Memory}
\author{Smita Bagewadi and Avhishek Chatterjee
\thanks{Authors are with the Department of Electrical Engineering, Indian Institute of Technology Madras, Chennai, India.}
\thanks{AC thanks the Department of Science and Technology, India, for support through the grant INSPIRE/04/2016/001171.}}

\maketitle

\begin{abstract}
Motivated by multiple applications in social networks, nervous systems, and financial risk analysis, we consider the problem of learning the underlying (directed) influence graph or causal graph of a high-dimensional multivariate discrete-time Markov process with memory. At any discrete time instant, each observed variable of the multivariate process is a binary string of random length, which is parameterized by an unobservable or hidden $[0,1]$-valued scalar. The hidden scalars corresponding to the variables evolve according to discrete-time linear stochastic dynamics dictated by the underlying influence graph whose nodes are the variables. We extend an existing algorithm for learning i.i.d. graphical models to this Markovian setting with memory and prove that it can learn the influence graph based on the binary observations using logarithmic (in number of variables or nodes) samples when the degree of the influence graph is bounded. The crucial analytical contribution of this work is the derivation of the sample complexity result by upper and lower bounding the rate of convergence of the observed Markov process with memory to its stationary distribution in terms of the parameters of the influence graph.
\end{abstract}

\begin{IEEEkeywords}
Causal graph; Markov process; Sample complexity
\end{IEEEkeywords}

\section{Introduction}
\label{sec:intro}
\IEEEPARstart{L}{earning} the underlying influence graph, a.k.a. causal graph, of a high dimensional multivariate Markov process is an intriguing mathematical problem with many applications \cite{Chatterjee,Ravazzi1,Quinn,Hall}. It is central to many disciplines, including but not limited to social network analysis, neural and biological signal processing, and financial risk analysis. In this paper, we study the problem of learning the influence graph of the following high dimensional $d$-Markov process, i.e., a Markov process with $d$ memory, from its samples.

 There is an underlying directed graph $G$ with nodes $V$ and directed edges $E$. At a given time $t$, any node $v \in V$ has an internal parameter $X_v(t) \in [0,1]$ and generates multiple (possibly random number of) independent instances of Bernoulli$(X_v(t))$ random variables. These Bernoulli variables at $v$ over the last $d$ time steps, in turn, impact the future $X_u(t+1)$ of node $u$  through weighted averaging if there is a directed edge $(v,u) \in E$ from $v$ to $u$. Our problem is to learn the graph $G$ by observing only the instances of the Bernoulli variables across all nodes. 

Below, we discuss some application areas where the above problem is often encountered.

{\bf Social networks:} In many online social platforms, agents express their opinions in binary form as likes or dislikes. The expressed opinions on the platform influence the internal or private beliefs of other agents. In particular, an agent's internal belief is a weighted average of the recently expressed opinions of the agents it regularly follows \cite{De},\cite{Ravazzi1},\cite{Ravazzi2},\cite{degroot1},\cite{Das}. In turn, the internal or private belief of an agent dictates the nature of its expressed opinions. Thus, the process of expressed opinions of all agents in a social network is indeed a high-dimensional $d$-Markov process of the above kind.  It is known that only a few agents on the friend list of an agent on a social platform have a real influence on it, and often, an agent not on the friend list has an influence either through the physical world or through other platforms. Hence, one cannot directly use the friend list as the influence graph; rather, one has to learn the true influencers using the high-dimensional process of expressed binary opinions.

{\bf Brain and biological networks:} In neuroscience, there has been significant interest in finding the hidden logical connections between different centers of the brain and nervous systems \cite{Lazaro}, \cite{Quinn}, \cite{Zhang}. The only observable data are the EEG or other measurements of different brain and nerve centers. Neuronal triggers or firings have inherent casuality and Markovianity \cite{Soltani}. The firing rate of a nerve center, i.e., the number of (binary) spikes in a time interval of appropriate duration, reflects the internal excitement of that center. 
These spikes then contribute to the excitement of other nervous centers causally and cumulatively \cite{Soltani}, \cite{Fourcaud}. Thus, the overall process of neuronal triggers across many centers behaves like a Markov 
process with binary observables. Similar dynamics are also observed in other biological networks, for example, gene regulatory networks \cite{Cowen}.

 {\bf Financial risk analysis: } Major financial changes in a company are often influenced by other companies and worldwide financial and administrative bodies, some of which may not even have any direct financial link with the company. Thus, for accurate financial risk analysis, learning such hidden connections is extremely important \cite{Bernardete},\cite{Yiqi}. In practice, such connections have to be learned based on the raw data of the performance of different companies and the actions of different bodies in the past. This, in essence, is a problem of learning the causal graph of a high-dimensional Markov process. Though the above-mentioned binary model is not a perfect representation of this scenario, it is a tractable first-order approximation.

\subsection{Main Contribution}
\label{sec:contribution}
Under the condition that the in-degree of any node is $O(1)$, we present a conditional directed entropy-based polynomial-time greedy algorithm for learning the causal influence graph based on the binary observations. For provable recovery of the true causal graph with high probability, we provide an upper bound on sample complexity that scales logarithmically with the number of nodes. The greedy algorithm presented here is an extension of a greedy algorithm for learning i.i.d. graphical models \cite{Ray} to our Markovian dynamics.

 Our main mathematical contribution lies in the derivation of a sample complexity bound. An important component of our proof technique builds on a relatively recent and tighter result regarding a Hoeffding-style Markov concentration \cite{cineq} using the second eigenvalue. The main novelty of our proof technique lies in establishing an upper bound on the second eigenvalue in terms of the parameters of the Markov dynamics. We achieve this by upper bounding the total variation distance from the stationary distribution in terms of the parameters of the dynamics and comparing that with a lower bound obtained in terms of the second eigenvalue.

\subsection{Organization}
\label{sec:organization}
The rest of the paper is organized as follows. In Sec.~\ref{sec:literature}, we discuss existing literature that shares some connection with our work. We present the $d$-Markov model in detail in Sec.~\ref{sec:model}. In Sec.~\ref{sec:UsefulMathematicalStructures}, we discuss some useful mathematical structures of the model. Next, we present the algorithm and its performance guarantees in Sec.~\ref{sec:Algorithm} followed by the detailed proofs in Sec.~\ref{sec:proofs}. In Sec.~\ref{sec:NumericlResults}, we present some simulation results that corroborate and complement our analytical bounds. Finally, we conclude in Sec.~\ref{sec:conclusion}.

\section{Related Literature}
\label{sec:literature}
To the best of our knowledge, no analytical work has been reported in the literature on the particular mathematical problem considered here. However, our problem has partial but interesting overlaps with multiple areas of theoretical machine learning and applied mathematics. From the point of view of applications, as discussed in Sec.~\ref{sec:intro}, this work is related to learning influence in social networks and learning logical connections in nervous systems. Below, we discuss some of the related literature, starting with application-oriented works.

\subsection{Data Driven Applications}

 In \cite{De}, the authors learn the strength of the influence in a social network graph on each edge, assuming continuous time linear opinion evolution and the knowledge of the structure of the influence graph, given only the expressed opinions. An agent's opinion at a time is measured as a real number by using natural language processing based sentiment analysis of the posted texts. 
A discrete-time continuous-valued opinion evolution model was considered in \cite{Ravazzi2}, \cite{Ravazzi1}, and the underlying influence strengths were learned by posing well-behaved optimization problems. In \cite{Ravazzi2}, the authors exploit the linear structure to learn the influences using only opinions at time $t=0$ and $\infty$ across multiple sample paths. On the other hand, influences are learned in \cite{Ravazzi1} using one sample path by exploiting the underlying sparsity of the influence graph.

Over the last decade or so, there has been interest in understanding causal influence patterns between different parts of the brain or nervous system while performing a specific cerebral or motor task \cite{Lazaro,Quinn,Zhang,Soltani}. The problem of understanding brain communities using temporal information flow in the brain was studied in \cite{Lazaro}. Zhang et al. worked on estimating the functional connectivity in a brain using a sparse hidden Markov model \cite{Zhang}. Quinn et al. used estimates of directed information to understand causal relationships between different neural spike trains from different brain centers \cite{Quinn}. 

The similarity between genes having the same phenotype was learned using random walks on a network of interacting genes \cite{Cowen}. On the other hand, learning the underlying causal influence graph for financial networks has been undertaken in \cite{Bernardete,Yiqi}.

\subsection{Analytical Guarantees}

Learning the underlying graphical model of a Markov Random Field (MRF) has been a flourishing area of research in the last decade. Extensive research has been done on learning graphical models using i.i.d. samples from the MRF, for example, \cite{Ravikumar,Animashree,Netrapalli,Scarlett,Dasarathy,Hamilton}. One of the major challenges in learning graphical models was the locally diamond-like structure \cite{Netrapalli} and the assumption on correlation decay. These were settled in \cite{Ray}, which proved efficient learning in the presence of a diamond-like structure and without invoking correlation decay. However, the non-degeneracy assumption for conditional entropy was necessary. Further, in \cite{Bresler}, algorithms were presented for learning antiferromagnetic Ising models, where there is no correlation decay.

Learning causality in a multi-variate process has a rich history \cite{Dahlhaus,Bach,Jalali,Chatterjee,Hosseini}. Directed mutual information was used to learn causality between two stationary and ergodic processes in \cite{Quinn}. Learning the underlying parameters of two causally related Markov processes have been studied in \cite{pmc}. 

In \cite{bar}, authors provide sample complexity guarantees for learning the influence graph of a special Markov process, the Bernoulli autoregressive (BAR) process. The binary random variables associated with the nodes are Bernoulli with probability derived from the past states of their neighbors. Though \cite{bar} has quite a few similarities with our work, there are some major differences as well: (i) in our problem, each node generates multiple  (possibly random number) of Bernoulli variables at each time, (ii) we do not assume the knowledge of the degree of each node and a lower bound on the stationary probabilities and  (iii) we use directed conditional entropy based greedy algorithms.

The work closest to the current paper is \cite{Chatterjee}, where learning the causal graph of a general discrete-time Markov chain is considered. However, in \cite{Chatterjee}, many assumptions had to be imposed on the Markov process to obtain sample complexity guarantees. These assumptions include a lower bound on the smallest value of the stationary distribution, the absence of a diamond-like directed structure, and correlation decay-like assumption for Markov processes. By using ideas for learning i.i.d. graphical model from \cite{Ray}, some recent results in concentration inequalities \cite{cineq} and an approach developed by us for bounding the second eigenvalue of the particular Markov chain in question, we could relax those assumptions. Also, unlike the bound in \cite{Chatterjee}, which is in terms of the mixing time, our bound is directly in terms of the edge weights and the graph structure.

\section{Model}
\label{sec:model}

The pattern of influence is represented by a directed graph $G=(V,E)$. The relationship between the nodes $u, v \in V$ is represented by the edge $(u,v) \in E$. The edge $(u,v) \in E$ implies that node $u$ influences node $v$. We consider directed edges for capturing asymmetry in the influence, i.e., for $u \neq v$, $(u,v) \neq (v,u)$. The edge $(v,v)$ captures the self-influence of $v$ and is assumed to be present for each node. The neighborhood  $\mathcal{N}_v$ of node $v$ is defined as the set of nodes that influence $v$, i.e.,  $\mathcal{N}_v:=\{u \neq v: (u,v) \in E\}$.

We consider a setting where a binary observed variable is evolving with time over the above network. In particular, we consider a discrete-time system, where at time $t$, node $v$ generates observation `$1$' with probability $X_v(t)$, where $X_v(t)$ is an internal parameter associated with the node.

Based on the internal parameter value at time $t$, each node generates $M_v(t)$ instances of the observed variable. We model $M_v(t)$ as a random variable with distribution $\min(\text{Poisson}(\mu_v(X_v(t))),\Mbar)+1$, where $\Mbar$ is a constant and for each $v$, $\mu_v(\cdot):[0,1] \to \mathbb{R}$ is a $L$-Lipschitz function. Among these $M_v(t)$ binary random variables, $N_v(t)$ are positive (i.e `1'), where $N_v(t) \sim \text{Binomial}(M_v(t),X_v(t))$.

 Corresponding to each edge $(u,v)$ there are $d$ non-negative weights $a^{(r)}_{uv}$, $r=0, 1, \ldots d-1$. These weights capture the strength of the influence of the observed value of $u$ at $t-r$ on the internal parameter of $v$ at $t$. For each $v$, the weights satisfy the condition $\sum_{r=0}^{d-1} \sum_u a^{(r)}_{uv}=1$.

The evolution of the internal parameter of any node $v$ is affected by the observed variables of the nodes in $\mathcal{N}_v$, its own inner bias $l_v$, and the random fluctuations $Z_v(t) \in [0,1]$ with mean $0<\bar{z}_v<1$. The internal parameters evolve according to the following update equation. For any $v \in V$,
\begin{align}
\label{eq:1_model}
  X_v(t+1) &= (1-\alpha_v)((1-\beta)Z_v(t+1) + \beta l_v) \nonumber \\
  &+ \alpha_v\sum_{u \in \mathcal{N}_v \cup \{v\} }\sum_{r=0}^{(d-1)}a_{uv}^{(r)}\left(\frac{N_u(t)}{M_u(t)}\right)
\end{align}
where $\alpha_v,\beta \in (0,1)$.

Intuitively, the parameter $X_v(t)$ represents some internal process that is affected by the generated instances of the observed variable. It evolves as a weighted average of the past instances of the observed variable of its neighbors, its own inner bias, and some unpredictable fluctuations. The parameter $a_{uv}^{(r)}$ represents the weight given by node $v$ at time $t$ to the observed variables of node $u$ at time $t-r$. The parameter $\alpha_v$ captures the openness of node $v$, i.e., the weight it gives to the status of other nodes against its own inner bias and random fluctuations.

\bigskip
\noindent \textbf{The learning problem:}
{ The central problem in this paper is to learn the underlying influence graph based on the observed variables over a finite time window. Mathematically, given} $\{(N_v(t),{M}_v(t)): v \in V\}_{t=0}^{T-1}$  learn the underlying influence graph $G=(V,E)$, { i.e., learn the neighborhood $\mathcal{N}_v$ for all nodes $v \in V$}.

\section{Useful Mathematical Structures}
\label{sec:UsefulMathematicalStructures}

Given the observation $\{(N_v(t), M_v(t)): v \in V\}_{t=0}^{T-1}$, we construct the processes $Y_v(t):=\frac{N_v(t)}{M_v(t)}$, $\Yd_v(t):=\left(Y_v(t), Y_v(t-1), \ldots, Y_v(t-d+1)\right)$, $\Yd_Q(t):=\{\Yd_v(t): v \in Q \subseteq V\}$, $\Yd_{u,Q}(t):=\{\Yd_v(t): v \in Q \cup \{u\}, Q \subseteq V\}$ and $\Yd(t):=\{\Yd_v(t): v \in V \}$. From the dynamics in \eqref{eq:1_model}, it can be seen that $\Yd(t)=\{\Yd_v(t): v \in V \}$ satisfies the following lemma.

\begin{lemma}
\label{lem:1_Markov}
The process $\Yd(t)$ is a Markov chain. Moreover, for any node $v$ and at any time $t$, given $\Yd_{\mathcal{N}_v \cup \{v\}}(t)$,
$\Yd_v(t+1)$ is independent of all other past process values $\{\Yd_{V\setminus \{\mathcal{N}_v \cup \{v\}\}}(\tau): \tau \le t\}$ and $\{\Yd_{\mathcal{N}_v \cup \{v\}}(\tau): \tau < t\}$.
\end{lemma}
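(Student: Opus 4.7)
The plan is to unwind the definitions and trace how each layer of randomness feeds into the next, after which both claims follow by straightforward conditional independence. First I would observe that $\Yd_v(t+1) = (Y_v(t+1), Y_v(t), \ldots, Y_v(t-d+2))$ shares its last $d-1$ coordinates with $\Yd_v(t)$, so the only new randomness in the transition from $\Yd(t)$ to $\Yd(t+1)$ sits in the family $\{Y_v(t+1): v \in V\}$. Controlling the dependence of these $Y_v(t+1)$'s on the past is therefore enough.

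Next I would trace the data flow implied by the model. From \eqref{eq:1_model}, $X_v(t+1)$ is a deterministic function of $Z_v(t+1)$, the fixed parameters $(\alpha_v, \beta, l_v, \{a_{uv}^{(r)}\})$, and the variables $Y_u(t-r)$ for $u \in \mathcal{N}_v \cup \{v\}$ and $r = 0, \ldots, d-1$. I would recognize the latter collection as exactly $\Yd_{\mathcal{N}_v \cup \{v\}}(t)$, and recall that $Z_v(t+1)$ is a fresh $[0,1]$-valued random variable independent of the entire history. Then, conditional on $X_v(t+1)$, the pair $(M_v(t+1), N_v(t+1))$ is generated by an independent Poisson draw followed by an independent Binomial draw, so $Y_v(t+1) = N_v(t+1)/M_v(t+1)$ is a measurable function of $X_v(t+1)$ and fresh independent randomness.

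Chaining these two layers, for every node $v$ the random variable $Y_v(t+1)$, given $\Yd_{\mathcal{N}_v \cup \{v\}}(t)$, is a function of $\Yd_{\mathcal{N}_v \cup \{v\}}(t)$ together with independent noise that is independent of $\{\Yd_{V \setminus (\mathcal{N}_v \cup \{v\})}(\tau): \tau \le t\}$ and $\{\Yd_{\mathcal{N}_v \cup \{v\}}(\tau): \tau < t\}$. Since $\Yd_v(t+1)$ differs from $\Yd_v(t)$ only in its leading coordinate $Y_v(t+1)$, this establishes the second (local) claim of the lemma.

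For the first claim I would use the same decomposition globally: conditional on $\Yd(t)$, the random inputs driving $\{Y_v(t+1): v \in V\}$ are the mutually independent noises $\{Z_v(t+1)\}_{v \in V}$ together with the mutually independent Poisson and Binomial draws at each node. Hence the coordinates of $\Yd(t+1)$ are conditionally independent of the past given $\Yd(t)$, and in particular $\Yd(t+1)$ is conditionally independent of $\{\Yd(\tau): \tau < t\}$ given $\Yd(t)$, which is the Markov property. The whole argument is essentially bookkeeping of independence, so no single step is hard; the only place one has to be careful is in correctly matching the index set $\{(u, t-r): u \in \mathcal{N}_v \cup \{v\},\ 0 \le r \le d-1\}$ appearing in \eqref{eq:1_model} with the block $\Yd_{\mathcal{N}_v \cup \{v\}}(t)$, and in cleanly separating past noise (which is measurable with respect to $\Yd(t)$ and earlier) from the fresh noise used at time $t+1$.
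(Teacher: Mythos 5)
Your proposal is correct, and it takes the only natural route: the paper itself gives no written proof of this lemma (it simply states that the claim ``can be seen'' from the dynamics in \eqref{eq:1_model}), and your argument supplies exactly the independence bookkeeping that justifies it --- identifying the inputs of $X_v(t+1)$ with the block $\Yd_{\mathcal{N}_v \cup \{v\}}(t)$ and isolating the fresh noise ($Z_v(t+1)$ and the Poisson/Binomial draws) as independent of the past. The one reading choice you make silently, replacing the literal $N_u(t)/M_u(t)$ in \eqref{eq:1_model} by $N_u(t-r)/M_u(t-r)$ inside the sum over $r$, is clearly the intended model (otherwise the $r$-sum is vacuous and the process would be $1$-Markov), so it is not a gap.
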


When $\Yd(t)$ is stationary, for nodes $u,v \in V$,  $\mathcal{N}_v\subseteq Q$, and $u \notin Q$, the probability $\prob(y_{v_{+}}|\yd_{v,Q}) = \prob(y_{v_{+}}|\yd_{v,Q,u})$ for all $y_{v_{+}},\yd_{v,Q,u}$, where $\yd_{v,Q}$ represents the value of the process $\Yd_{v,Q}(t)$ at the current time instant and $y_{v_{+}}$ represents the value of the process $Y_v(t)$ at the next time instant.

On the other hand, if $\mathcal{N}_v \not\subseteq Q$, then $\prob(y_{v_+}|\yd_{v,Q}) \neq \prob(y_{v_+}|\yd_{v,Q,u})$ for some $y_{v_+},\yd_{v,Q,u}$. 
In fact, the following lower bound on the total variation distance between these two conditional distributions holds when $M_v(t)=1$ for all $v, t$

\begin{lemma}
\label{lem:TVepsilonLB}
If $d=1$, $M_v(t)=1$ for all $t$ and $v$, and $\mathcal{N}_v \setminus \{u\} \subset Q$ for some $Q \subset V$, then there exists $y_u \in \{0,1\}$ such that for all values of $\yd_{v,Q}$
$||\prob(y_{v_+}|\yd_{v,Q},y_u)-\prob(y_{v_+}|\yd_{v,Q})||_{1}$ is strictly bounded away from zero by a constant that does not depend on the number of nodes.
\end{lemma}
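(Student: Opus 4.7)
The plan is to compute both one-step conditionals explicitly from the dynamics \eqref{eq:1_model} and Lemma~\ref{lem:1_Markov}, reduce their difference to a clean single-edge-weight expression, and then uniformly sandwich the remaining conditional expectation strictly inside $(0,1)$.

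With $d=1$ and $M_w(t)\equiv 1$, every $Y_w(t)\in\{0,1\}$ is a Bernoulli$(X_w(t))$ draw from an independent coin, and \eqref{eq:1_model} reduces to
\[
X_v(t{+}1) = (1-\alpha_v)\bigl((1-\beta)Z_v(t{+}1)+\beta l_v\bigr) + \alpha_v\!\!\sum_{w\in\mathcal{N}_v\cup\{v\}}\!\!a_{wv}^{(0)}\,Y_w(t).
\]
The claim is interesting only when $u\in\mathcal{N}_v$ and $u\notin Q$, so I assume this; then the pair $(\yd_{v,Q},y_u)$ determines every $Y_w(t)$ appearing above. Using $Z_v(t{+}1)\perp \Yd(t)$ and the independent Bernoulli coin at $v$, the tower property gives
\[
\prob(y_{v_+}=1\mid \yd_{v,Q},y_u) = \E[X_v(t{+}1)\mid \yd_{v,Q},y_u],
\]
which is affine in the $y_w$'s; taking another conditional expectation (this time without $y_u$) replaces $y_u$ by $\E[Y_u(t)\mid \yd_{v,Q}]$, producing
\[
\prob(y_{v_+}=1\mid \yd_{v,Q},y_u)-\prob(y_{v_+}=1\mid \yd_{v,Q}) = \alpha_v a_{uv}^{(0)}\bigl(y_u-\E[Y_u(t)\mid \yd_{v,Q}]\bigr),
\]
so the $\ell_1$ distance on $\{0,1\}$ is twice the absolute value.

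It remains to bound $\E[Y_u(t)\mid \yd_{v,Q}]$ uniformly strictly inside $(0,1)$. Because $Y_u(t)\mid X_u(t)$ is Bernoulli with a coin independent of $\yd_{v,Q}$, $\E[Y_u(t)\mid \yd_{v,Q}]=\E[X_u(t)\mid \yd_{v,Q}]$. Plugging \eqref{eq:1_model} in and noting $Z_u(t)\perp\yd_{v,Q}$ (the noise $Z_u(t)$ affects only $Y_u(t)$ at time $t$, and $u\notin Q\cup\{v\}$), the $Z_u(t)$ term collapses to its mean $\bar z_u$ while the neighbor-sum term lies in $[0,\alpha_u]$. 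This yields the deterministic sandwich
\[
(1-\alpha_u)(1-\beta)\bar z_u \;\le\; \E[Y_u(t)\mid \yd_{v,Q}] \;\le\; 1-(1-\alpha_u)(1-\beta)(1-\bar z_u),
\]
strictly inside $(0,1)$ since $\alpha_u,\beta,\bar z_u\in(0,1)$. Consequently, for either choice $y_u\in\{0,1\}$ the $\ell_1$ distance is at least $2\alpha_v a_{uv}^{(0)}(1-\alpha_u)(1-\beta)\min(\bar z_u,\,1-\bar z_u)$, a strictly positive constant determined only by the model parameters at $u,v$ and the edge weight $a_{uv}^{(0)}$, hence independent of $|V|$. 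The main obstacle is not any hard estimate but the bookkeeping in the first step: one has to invoke Lemma~\ref{lem:1_Markov} together with the independence of the Bernoulli coins to justify cleanly that removing $y_u$ from the conditioning exactly substitutes $\E[Y_u(t)\mid \yd_{v,Q}]$ into the affine transition law; the rest is elementary arithmetic on convex combinations.
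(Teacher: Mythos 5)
Your proposal is correct and follows essentially the same route as the paper's proof: both reduce the difference of conditionals to $\alpha_v a_{uv}\bigl|y_u-\E[Y_u\mid \yd_{v,Q}]\bigr|$ via the affine transition law and the tower property, and then bound $\E[Y_u\mid\yd_{v,Q}]$ away from the endpoints of $[0,1]$ using the $(1-\alpha_u)(1-\beta)$-weighted noise/bias term. Your version is in fact slightly cleaner, since you sandwich the conditional mean on both sides (so either choice of $y_u$ works) and correctly use the mean $\bar z_u$ where the paper's displayed lower bound sloppily writes the random variable $Z_v(t)$.
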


\begin{proof}[Proof of Lemma~\ref{lem:TVepsilonLB}]
We can write $\prob(Y_{v_+} = 1|\Yd_{v,Q,u})$ as $\E[X_{v_+}|\Yd_{v,Q,u}].$ By using eq~\ref{eq:1_model}, for $M_v(t)=1, d=1,$ the quantity $|\prob(Y_{v_+} = 1|\Yd_{v,Q,u})- \prob(Y_{v_+} = 1|\Yd_{v,Q})|$ equals 
\begin{align}
   \Big\lvert \alpha_v[a_{uv}(Y_u-\E[Y_u|Y_{v,Q}]) + \nonumber \\ \sum\limits_{w \in \mathcal{N}_v\setminus \{v,Q,u\}}\left(\E[Y_w|Y_{v,Q,u}] - \E[Y_w|Y_{v,Q}] \right)]\Big\rvert.
\end{align}
When $\mathcal{N}_v \subseteq Q \cup \{u\}$, i.e., when $u$ is the only neighbor not contained in $Q$, the set $\mathcal{N}_v\setminus \{v,Q,u\}$ is empty. Thus, in this case,
\begin{align}
\label{eq:Bernoulli_bound}
    & |\prob(Y_{v_+} = 1|\Y_{v,Q,u})- \prob(Y_{v_+} =
     1|Y_{v,Q})|
     \nonumber \\& = \alpha_v a_{uv}|(Y_u-\E[Y_u|Y_{v,Q}])|
\end{align}
$Y_u \in \{0,1\}$ and for any $y_{v,Q} \in \{0,1\}^{|Q|+1}, \E[Y_u|Y_{v,Q}=y_{v,Q}]) \in (0,1).$ If we consider the event $Y_u=0,$ we have 
\begin{align}
\label{eq:Bernoulli_bound}
    & |\prob(Y_{v_+} = 1|\Y_{v,Q},Y_u=0)- \prob(Y_{v_+} =
     1|Y_{v,Q})|
     \nonumber \\& = \alpha_v a_{uv}|\E[Y_u|Y_{v,Q}]|
\end{align}
$Y_u$ is a Bernoulli with probability $X_u$, $\E[Y_u|Y_{v,Q}=y_{v,Q}]) = \prob(Y_u = 1|Y_{v,Q}=y_{v,Q}) = \E[X_u|Y_{v,Q}=y_{v,Q}]$. The quantity $\E[X_u|Y_{v,Q}=y_{v,Q}]$ is lower bounded by $(1-\alpha_v)((1-\beta) Z_v(t) + \beta l_v).$ Which means that when $u$ is the only neighbor not included in $Q$, for all $\y_{v,Q}, \exists \text{ a } y_u$ such that $|\prob(Y_{v_+} = 1|\y_{v,Q}) - \prob(Y_{v_+}=1|\y_{v,Q},y_u=0)| > c_1 $ where $c_1$ is a constant strictly greater than $0$

\end{proof}

The above result also seems to be true for $M_v(t) > 1$ and $d>1$. However, the proof remains elusive for the general case due to certain technicalities. We make the following assumption for the general case.
\begin{assumption}
\label{assm:3_epsilon}
Consider any  $v \in V$, $u \in \mathcal{N}_v$, $Q \subset V$ and $u \not\in Q$. Then for any choice of $y_v$ and $y_Q$, there exists $y_u$ such that 
$\sum_{y_{v_+}} |\prob(y_{v_+}|\yd_{v,Q}) - \prob(y_{v_+}|\yd_{v,Q,u})| >  \sqrt{\epsilon'},$ where $\epsilon'>0$.

\end{assumption}

The algorithm presented here utilizes the conditional independence property presented in Lemma \ref{lem:1_Markov}  to determine the true neighborhoods of all the nodes. To check these independence conditions, the algorithm uses the directed conditional entropy of each node or variable of the process $\Yd(t)$.

 The directed conditional entropy of node $v$ given the set $Q$ is given by:  
$-\sum_{\Yd_{Q,v}(t)} \sum_{\Yd_v(t+1)} \mathbf{P}(\Yd_v(t+1),\Yd_{Q,v}(t)) \log \mathbf{P}(\Yd_v(t+1)|\Yd_{Q,v}(t))$, for some $Q \subset V$ and $v \not\in Q$. When the system is stationary,  this quantity would not depend on $t$, and we denote it as $H(v_+|v,Q)$. This can be computed by computing the joint entropy $H(v_+,v,Q)$ and $H(v,Q)$, where
$H(v_+,v,Q)=-\sum_{\Yd_{Q,v}(t),\Yd_v(t+1)} \mathbf{P}(\Yd_v(t+1),\Yd_{Q,v}(t)) \log \mathbf{P}(\Yd_v(t+1),\Yd_{Q,v}(t))$ and  $H(v,Q)=-\sum_{\Yd_{Q,v}(t)}  \mathbf{P}(\Yd_{Q,v}(t)) \log \mathbf{P}(\Yd_{Q,v}(t))$.

Based on Lemma~\ref{lem:1_Markov} and  Assumption~\ref{assm:3_epsilon} (extension of Lemma~\ref{lem:TVepsilonLB}), we obtain the following useful fact. 

\begin{lemma}
\label{lem:CondEntMarkov}
For any $v$ and $u \not\in \mathcal{N}_v$, if $Q \subset V$ such that $\mathcal{N}_v \subseteq Q$, then
$H(v_+|v,Q,u) = H(v_+|v,Q)$. Further, under Assumption \ref{assm:3_epsilon}, for any $Q \subset V$ and $u \in \mathcal{N}_v$ and $u \not\in Q$, $H(v_+|v, Q) - H(v_+|v,Q,u)  > \epsilon$, where $\epsilon >0$. %
\end{lemma}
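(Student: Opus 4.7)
The plan is to handle the two parts of the lemma separately. For the first claim, I would invoke the Markov conditional independence from Lemma~\ref{lem:1_Markov}: since $\mathcal{N}_v \subseteq Q$, the conditioning set $\Yd_{v,Q}(t)$ already contains $\Yd_{\mathcal{N}_v \cup \{v\}}(t)$, so $\Yd_v(t+1)$ is conditionally independent of $\Yd_u(t)$ given $\Yd_{v,Q}(t)$ for any $u \notin \mathcal{N}_v$. This immediately forces $\prob(\Yd_v(t+1) \mid \Yd_{v,Q,u}(t)) = \prob(\Yd_v(t+1) \mid \Yd_{v,Q}(t))$ almost surely under the stationary law, and hence $H(v_+|v,Q,u) = H(v_+|v,Q)$.

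For the second claim, I would rewrite the entropy gap as the conditional mutual information
\[
H(v_+|v,Q) - H(v_+|v,Q,u) = I\!\bigl(\Yd_v(t+1); \Yd_u(t) \bigm| \Yd_{v,Q}(t)\bigr),
\]
and expand it under the stationary distribution as
\[
\sum_{\yd_{v,Q}} \prob(\yd_{v,Q}) \sum_{\yd_u} \prob(\yd_u \mid \yd_{v,Q})\, D_{KL}\!\bigl(\prob(\cdot \mid \yd_{v,Q,u}) \,\Vert\, \prob(\cdot \mid \yd_{v,Q})\bigr).
\]
For each fixed $\yd_{v,Q}$, Assumption~\ref{assm:3_epsilon} supplies a particular $\yd_u^{*}$ for which the inner $L_1$ distance exceeds $\sqrt{\epsilon'}$; Pinsker's inequality applied to this term gives $D_{KL} \ge \epsilon'/2$. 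Keeping only this summand yields a lower bound $\prob(\yd_u^{*} \mid \yd_{v,Q}) \cdot \epsilon'/2$ on the inner sum.

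To turn this into a uniform positive bound I need $\prob(\yd_u^{*} \mid \yd_{v,Q})$ to be bounded below by a constant independent of $\yd_{v,Q}$. I would extract this directly from the model \eqref{eq:1_model}: with $l_v, \bar z_v \in (0,1)$ and $\alpha_v, \beta \in (0,1)$, the noise-and-bias term $(1-\alpha_v)((1-\beta)Z_v + \beta l_v)$ keeps $X_u(t)$ in a subinterval of $(0,1)$ bounded away from both endpoints regardless of the neighbor values, so the binomial generation of $N_u(t)$ given $M_u(t) \le \Mbar$ assigns every admissible value of $Y_u(s) = N_u(s)/M_u(s)$ mass at least some constant $c_0 > 0$. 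Iterating over the $d$ time-coordinates of $\yd_u$ then yields $\prob(\yd_u^{*} \mid \yd_{v,Q}) \ge c_0^{d}$, and combining this with Pinsker gives a uniform bound $\epsilon := c_0^{d}\epsilon'/2 > 0$ on the mutual information.

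The main obstacle I anticipate is justifying the uniform lower bound on $\prob(\yd_u^{*} \mid \yd_{v,Q})$ cleanly: one must separate the intrinsic noise-and-bias contribution from the influence sum in \eqref{eq:1_model} to argue that $X_u$ lies strictly in the interior of $[0,1]$ uniformly over the conditioning, and then handle the randomness of $M_u(t)$ (capped at $\Mbar$, so only finitely many values of $Y_u(t)$ arise, each with nontrivial mass). Everything else reduces to a routine combination of Lemma~\ref{lem:1_Markov} with the standard Pinsker conversion from $L_1$ distance to KL divergence.
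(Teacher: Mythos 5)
Your proposal follows essentially the same route as the paper's proof: the first part via the conditional independence of Lemma~\ref{lem:1_Markov}, and the second part by writing the entropy gap as an expected KL divergence, applying Pinsker's inequality to the witness value $y_u$ supplied by Assumption~\ref{assm:3_epsilon}, and lower-bounding the conditional probability of that witness uniformly via the noise-and-bias floor on $X_u$ in \eqref{eq:1_model} together with the bounded binomial support (the paper gets $\epsilon = \epsilon'\cdot\min\{a^M,(1-a)^M\}$ where you get $c_0^{d}\epsilon'/2$; same idea, different bookkeeping). The obstacle you flag — uniformity of the lower bound on $\prob(\yd_u^{*}\mid\yd_{v,Q})$ — is resolved in the paper exactly as you sketch, by separating the $(1-\alpha_u)((1-\beta)z_u+\beta l_u)$ term to keep $\E[X_u\mid y_{v,Q}]$ strictly inside $(0,1)$.
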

Proof of this lemma is presented in Sec.~\ref{sec:proofs}. Intuitively, the lemma says that a node is a neighbor of $v$ only if adding that node to the conditioning set $Q$ decreases the directed conditional entropy $H(v_+|v,Q)$ of node $v$.
The algorithms presented here build on this insight to learn the neighborhood using directed conditional entropy. Note that when $Q$ does not contain $\mathcal{N}_v$, Lemma~\ref{lem:CondEntMarkov} does not guarantee that every node $u$ which exceeds the $\epsilon$ threshold is a true neighbor. The algorithm has to work around this fact so that no node is falsely added to the neighborhood.

In this problem, however, $H(v_+|v,Q)$ cannot be directly computed since the distributions are not known. Hence, they have to be estimated based on samples, which we denote by $\hat{H}(v_+|v,Q)$. We first obtain plug-in estimates the joint entropy $\hat{H}(v_+,v,Q)$ and $\hat{H}(v,Q)$ using the estimated joint distributions $\phat(\yd_v,\yd_Q)$ and $\phat(y_{v_+},\yd_v,\yd_Q)$ respectively. Then, we subtract them to obtain the conditional entropy estimates $\hat{H}(v_+|v,Q)$, which are used by the algorithms.

 Building on \cite{Ray}, we observe that in our case, if the error in the estimation of directed conditional entropy can be limited below a certain value, it is possible to select a threshold that can distinguish between neighbors and non-neighbors. This is stated mathematically in Lemma~\ref{lem:tau_modified}: 
\begin{lemma}
    \label{lem:tau_modified}
    Under Assumption \ref{assm:3_epsilon}, if $|\Hhat(v_+|v,P) - H(v_+|v,P)| \leq \frac{\epsilon}{4} \text{ for all }v \text{ and any  }P\subseteq V$ , then for all $u \in \mathcal{N}_v, v \in V, \text{ and } Q \subset V \setminus\{u,v\}$,
    \begin{equation}
    \label{thresh1}
       \Hhat(v_+|v,Q) - \Hhat(v_+|v,Q,u) > \frac{\epsilon}{2}
    \end{equation}
    
    and for all $w \notin \mathcal{N}_v, Q \subset V \setminus\{u,v,w\} \text{ such that } \mathcal{N}_v \subseteq Q$, 
    \begin{equation}
    \label{thresh2}
        \Hhat(v_+|v,Q) - \Hhat(v_+|v,Q,w) < \frac{\epsilon}{2}
    \end{equation}
        
\end{lemma}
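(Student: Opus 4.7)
The plan is to derive both inequalities by combining Lemma~\ref{lem:CondEntMarkov}, which controls the \emph{true} directed conditional entropies, with the hypothesis that the plug-in estimates $\hat{H}$ are uniformly within $\epsilon/4$ of $H$, and nothing more than the triangle inequality. This is a purely deterministic lemma sitting on top of Lemma~\ref{lem:CondEntMarkov}; no probabilistic argument is needed at this stage, since the sample-complexity work of controlling $|\hat{H}-H|$ is packaged into the hypothesis.

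For the neighbor case \eqref{thresh1}, I would fix $u\in\mathcal{N}_v$ and an arbitrary $Q\subset V\setminus\{u,v\}$, invoke the second clause of Lemma~\ref{lem:CondEntMarkov} to obtain $H(v_+|v,Q)-H(v_+|v,Q,u)>\epsilon$, and then replace each true entropy by its estimate, paying $\epsilon/4$ in each direction. The two slacks add to $\epsilon/2$, leaving
$$\hat{H}(v_+|v,Q) - \hat{H}(v_+|v,Q,u) \;>\; \epsilon - \tfrac{\epsilon}{2} \;=\; \tfrac{\epsilon}{2}.$$

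For the non-neighbor case \eqref{thresh2}, I would fix $w\notin\mathcal{N}_v$ with $\mathcal{N}_v\subseteq Q$ and invoke the first clause of Lemma~\ref{lem:CondEntMarkov}, which yields the exact equality $H(v_+|v,Q)=H(v_+|v,Q,w)$. The same triangle inequality, with the $\epsilon/4$ estimation error applied in the opposite direction, gives $\hat{H}(v_+|v,Q)-\hat{H}(v_+|v,Q,w)\le 0+\epsilon/2=\epsilon/2$, which is the required bound (the strict $<$ in the statement is then either by the same argument with any slack in the estimation hypothesis or simply read as the non-strict bound, since the algorithmic threshold is set at $\epsilon/2$).

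The only point worth flagging, and arguably the main (mild) subtlety, is that for \eqref{thresh1} I am using Lemma~\ref{lem:CondEntMarkov} with an \emph{arbitrary} conditioning set $Q$, not just with a $Q$ that already contains $\mathcal{N}_v\setminus\{u\}$. This generality is what lets the greedy algorithm detect a true neighbor while its current conditioning set is still incomplete. Lemma~\ref{lem:CondEntMarkov} is phrased precisely in this generality (its second clause quantifies over all $Q$ with $u\in\mathcal{N}_v$, $u\notin Q$), so the argument goes through as two short triangle-inequality calculations with no additional machinery.
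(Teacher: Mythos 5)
Your proposal is correct and follows essentially the same route as the paper's proof: apply Lemma~\ref{lem:CondEntMarkov} to the true entropies (the $>\epsilon$ gap for neighbors with arbitrary $Q$, the exact equality when $\mathcal{N}_v\subseteq Q$), then transfer to the estimates via the triangle inequality, paying $2\cdot\epsilon/4$ of slack. The strict-versus-non-strict issue at the threshold that you flag is present in the paper's own argument as well and is immaterial to the algorithm.
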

\noindent Next, we define a few quantities that would be useful later. For $r=0,1,\hdots,d-1$, let $\Tilde{A}(r)$ be a matrix whose $(v,u)^{th}$ element $[\Tilde{A}(r)]_{vu}=\alpha_v a^{(r)}_{uv}$. Let $\rho(\Tilde{A}(r))$ be the largest eigenvalue of $\Tilde{A}(r)$ and let us define $\rho(\Tilde{A}):=\max \limits_r \rho(\Tilde{A}(r))$.
Also, let us define $\Bar{\mu} := \max \limits_{v} \sup \limits_{x \in [0,1]} \mu_v(x)$.

\section{Algorithm}
\label{sec:Algorithm}
In this section, we present a greedy algorithm for recovering the underlying directed graph $G$ from observations $\{(N_v(t), M_v(t)): v \in V\}_{t=0}^{T-1}$, which is an adaptation of the RecGreedy algorithm introduced for i.i.d. graphical model learning \cite{Ray}. Our main technical contribution is in providing sample complexity guarantee for our Markovian dynamics. We prove that the algorithm can recover the true graph with high probability using observations over $O(\log |V|)$ time steps.
 
 The algorithm uses estimates of directed conditional entropy, $\Hhat(v_+|v,Q)$ for $Q\subset V$.
 For each node $v$, the algorithm starts with empty neighborhood estimates $\hat{U}(v)$ and $\hat{T}(v)$; at every step, it chooses the node that produces the maximum reduction in the directed conditional entropy, i.e., the node for which $\Hhat(v_+|v,\hat{U}(v))-\Hhat(v_+|v,\hat{U}(v),u)$ is maximum. If this reduction is greater than a threshold, the corresponding node is added to the neighborhood estimate ($\hat{U}(v)$). Nodes are added until no new node results in a reduction greater than the threshold. 
 
 When the threshold is chosen as $\epsilon/2$, by Lemma~\ref{lem:tau_modified}, the last node added to $\hat{U}(v)$ is always a true neighbor of $v$. Only the last node is added to the final neighborhood estimate $\hat{T}(v)$. The whole process is repeated, after initializing $\hat{U}(v)=\hat{T}(v)$. When $\hat{T}(v)$ becomes equal to the true neighborhood, no new node can be added to $\hat{T}(v)$, and the algorithm stops at this point.

The sample complexity of this algorithm mainly depends on the dimension of the probability distribution that needs to be estimated from the data. As the algorithm starts with the smallest subsets and stops when there is no further drop in entropy, the dimensionality of the estimated distributions is bounded. 

To derive the algorithm's sample complexity, we introduce a notation $\Pmax$ to denote the maximum size of the subset $P$ from Lemma~\ref{lem:tau_modified}. We will prove the upper bound on the value of $\Pmax$ in section~\ref{sec:proofs}. This means that the error limit of $\epsilon/4$ is only required for estimating entropies conditioned on subsets of size up to $\Pmax$. The support of random variable $Y_v(t)$ is denoted by $\chi$ and $|\xi|:=|\chi|^{\Pmaxx}$ is the maximum size of the joint distribution that needs to be estimated. This, along with bounds on the evolution of the Markov dynamics, results in logarithmic sample complexity, which is formally stated in the Theorem ~\ref{thm:2}.

\begin{theorem}
\label{thm:2}
The RecGreedy($\epsilon$) algorithm recovers the true influence graph with probability at least $1-\gamma$ using data for 
\begin{align*}
 T \geq d+ \frac{\left(\log(|V|)\cdot(\Pmax+1) +\log(2|\xi|/\gamma)\right)\cdot |\xi|^2}{(1-2(\Bar{\mu} + L) |\rho(\Tilde{A})|^{\frac{1}{d}})\delta^2}   
\end{align*}
 time steps where $|\xi|:=|\chi|^{\Pmaxx}$, $|\chi| \leq (\frac{\Mbar(\Mbar+1)}{2}+2), |P|_{max} \leq  \frac{2\log(\frac{\Mbar(\Mbar+1)}{2}+2)}{\epsilon}+1$, $\delta = \frac{\epsilon^2}{8|\xi|}$, if $2(\Bar{\mu} + L) |\rho(\Tilde{A})|^{\frac{1}{d}}< 1$ and Assumption~\ref{assm:3_epsilon} holds. The computational complexity is $\mathcal{O}(|V|)$.
\end{theorem}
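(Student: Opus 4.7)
The overall plan is to reduce correctness of RecGreedy($\epsilon$) to a uniform entropy-estimation error bound via Lemma~\ref{lem:tau_modified}, translate that into a total-variation error on the joint laws that must be estimated, and then apply a Hoeffding-style Markov concentration inequality whose rate is controlled by the second eigenvalue $\lambda_2$ of the chain $\Yd(t)$.

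I would first bound $\Pmax$. Once the entropy estimation error is below $\epsilon/4$, Lemma~\ref{lem:tau_modified} guarantees that every greedy addition actually drops the true directed conditional entropy by at least $\epsilon/4$. Since $H(v_+|v,Q)\in[0,\log|\chi|]$ with $|\chi|\leq\Mbar(\Mbar+1)/2+2$, no recursion can add more than $2\log|\chi|/\epsilon$ nodes, yielding the claimed $\Pmax\leq 2\log|\chi|/\epsilon+1$. Consequently every conditioning set ever inspected has size at most $\Pmax$, and the joint law of $(Y_{v_+},\Yd_{v,P})$ lives on an alphabet of size $|\xi|=|\chi|^{\Pmaxx}$. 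Standard continuity of Shannon entropy (of Fannes type) then lets one trade the bound $|\Hhat-H|\leq\epsilon/4$ for a per-atom probability error of $\delta=\epsilon^2/(8|\xi|)$.

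With this reduction in place, every empirical atom is a running average of a bounded indicator along the trajectory of $\Yd(t)$. Applying the Hoeffding-style Markov concentration inequality of \cite{cineq} (with the initial $d$ steps absorbed into the additive $d$ in the final bound on $T$) and taking a union bound over the at most $|V|^{\Pmax+1}$ pairs $(v,P)$ and the $|\xi|$ atoms of each joint, one recovers exactly the form stated in the theorem, provided $1-\lambda_2$ can be identified with $1-2(\Bar{\mu}+L)|\rho(\Tilde{A})|^{1/d}$. The $\mathcal{O}(|V|)$ computational cost is immediate from the greedy search structure.

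The crux, and where I expect the main obstacle, is precisely the spectral-gap identification $\lambda_2\leq 2(\Bar{\mu}+L)|\rho(\Tilde{A})|^{1/d}$. My plan is to (a) upper bound the total-variation distance $\|\tm^n(\cdot|\yd_0)-\pi\|_{TV}$ by carrying the linear contraction of the hidden-state dynamics \eqref{eq:1_model} through the Lipschitz Poisson--Binomial observation channel, extracting a per-step contraction factor of order $(\Bar{\mu}+L)\rho(\Tilde{A})^{1/d}$, with the $d$-th root accounting for the $d$-step memory being flattened onto a one-step chain on the product state space; and (b) lower bound the same quantity by the standard spectral inequality $\|\tm^n(\cdot|\yd_0)-\pi\|_{TV}\geq\tfrac{1}{2}\lambda_2^n$, attained at the second right eigenvector. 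Matching the two decay rates then yields the claim. Step (a) is the technically heaviest piece, since one must couple the $[0,1]$-valued hidden states $X_v(t)$ together with the random counts $M_v(t)$ and propagate the contraction through $d$ memory steps simultaneously.
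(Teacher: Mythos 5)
Your proposal follows essentially the same route as the paper's proof: bounding $\Pmax$ by the entropy-budget argument, converting the $\epsilon/4$ entropy accuracy into a per-atom probability tolerance $\delta$ via entropy continuity, invoking the Markov Hoeffding inequality of \cite{cineq} with union bounds over $(v,P)$ and atoms, and—crucially—bounding $|\lambda^*|$ by $2(\Bar{\mu}+L)|\rho(\Tilde{A})|^{1/d}$ through matching a coupling-based upper bound on the distance to stationarity against the spectral lower bound. The decomposition, the key lemmas, and the identification of the spectral-gap step as the technical crux all coincide with the paper's argument.
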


\begin{algorithm}
\caption{RecGreedy($\epsilon$) Algorithm for $d$-Markov}\label{alg_recgreedy}
 \hspace*{\algorithmicindent} \textbf{Input}: $\{N_v(t) : v \in V\}_{t=0}^{T-1}, \{M_v(t) : v \in V\}_{t=0}^{T-1}, $ \\
 \hspace*{\algorithmicindent} \textbf{Output}: ${\hat{T}(v): v \in V}$
\begin{algorithmic}[1]
 \For{$v \in V$}
    \State $\hat{T}(v), \leftarrow \phi$
    \Repeat
        \State $\hat{U}(v) \leftarrow \hat{T}(v)$
        \State $\text{LastNode} \leftarrow \phi$
        
            \Repeat
            \State{\vspace{-0.7cm}
            \begin{align*}
            u &= \arg \max \limits_{k \in V \setminus \hat{U}(v)} \hat{H}(v_+|v,\hat{U}(v)) \\ &- \hat{H}(v_+|v,\hat{U}(v),k) 
            \end{align*}}
            \State {$\Delta_u =  \hat{H}(v_+|v,\hat{U}(v)) - \hat{H}(v_+|v,\hat{U}(v),u) $}
              \If{$\Delta_u > \epsilon/2$}             
            
                    \State $\hat{U}(v) \leftarrow \hat{U}(v) \cup \{u\}$
                    \State $\text{LastNode} \leftarrow u$
                
              \Else
                \State $\hat{T}(v) \leftarrow \hat{T}(v) \cup \text{LastNode}$
              \EndIf 
            
        \Until New node is not added to $\hat{U}(v)$
    \Until New node is not added to $\hat{T}(v)$
\EndFor
            
\end{algorithmic}
\end{algorithm}


\section{Proofs}
\label{sec:proofs}

Our main results are stated in terms of the quantity $\Pmax$, which denotes the maximum size of the subset of nodes the algorithm considers in any of its iterations. This is because the algorithm is based on conditional entropy estimation from the data; the accuracy of entropy estimation depends on the dimensionality of the distribution under consideration. Therefore, we structure our proof as follows: In Lemma~\ref{lem:5_Pmax_rec}, we state the value of $|P|_{max}$ for Algorithm~\ref{alg_recgreedy}.  Then, in Lemma~\ref{lem:3_Tmain}, we state the number of samples that are sufficient to achieve the entropy estimation accuracy stated in Lemma~\ref{lem:tau_modified}. 

After stating these results, we provide our proof for Theorem~\ref{thm:2}. The detailed proofs of Lemmas~\ref{lem:CondEntMarkov}-~\ref{lem:3_Tmain} are provided in section ~\ref{sec:detailed_proofs}. A novel contribution of this work is the proof technique used for Lemma~\ref{lem:3_Tmain}, where we use an improved Markov Hoeffding inequality from \cite{cineq} and obtain a bound on the second largest eigenvalue of the concerned Markov chain. The bound on the second eigenvalue was obtained by upper and lower bounding total variation distance with respect to stationary distribution and then by comparing these bounds. This proof may be of independent mathematical interest. We also give an indirect proof of the non-degeneracy assumption in \cite{Ray} for our particular $d$-Markov chain under certain assumptions (Lemma~\ref{lem:CondEntMarkov}).

\begin{lemma}
\label{lem:5_Pmax_rec}
    The maximum possible size of the set $P \subset V$ for which the Algorithm ~\ref{alg_recgreedy} evaluates directed conditional entropy has an upper bound given by $$|P|_{max} \leq \frac{2\log(\frac{\Mbar(\Mbar+1)}{2}+2)}{\epsilon}+1$$ 
\end{lemma}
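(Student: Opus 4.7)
The plan is a potential-function argument driven by the acceptance rule of the inner \texttt{Repeat} loop in Algorithm~\ref{alg_recgreedy}. The two facts I would combine are: each node appended to $\hat U(v)$ causes a strict drop of more than $\epsilon/2$ in the estimated directed conditional entropy, and that estimate lies in $[0,\log|\chi|]$.

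First, a node $u$ is appended to $\hat U(v)$ only when the acceptance test $\Delta_u > \epsilon/2$ fires, which is precisely $\hat H(v_+\mid v,\hat U(v)) - \hat H(v_+\mid v,\hat U(v),u) > \epsilon/2$. Thus along a single pass of the inner loop (which starts from $\hat U(v)=\hat T(v)$) the potential $\hat H(v_+\mid v,\hat U(v))$ strictly decreases by more than $\epsilon/2$ at every successful addition. Second, the plug-in estimator satisfies $0 \le \hat H(v_+\mid v,P) \le \log|\chi|$: writing $\hat H(v_+\mid v,P)=\hat H(v_+,v,P)-\hat H(v,P)$ and using the shift structure (the last $d-1$ coordinates of $\Yd_v(t+1)$ are a deterministic shift of $\Yd_v(t)$), this collapses to an empirical conditional entropy of $Y_v(t+1)\in\chi$ given $(\Yd_v(t),\Yd_P(t))$, which is bounded above by $\log|\chi|$.

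Combining, within an inner loop at most $2\log|\chi|/\epsilon$ nodes can be appended before the entropy budget is exhausted. Since the largest $P$ ever plugged into the estimator is of the form $\hat U(v)\cup\{u\}$, I get $|P|_{max}\le 2\log|\chi|/\epsilon+1$; the extra $1$ is the candidate $u$, and any residual contribution from the slowly growing $\hat T(v)$ across outer-loop iterations is controlled by the paper's standing bounded in-degree assumption $|\mathcal{N}_v|=O(1)$.

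Finally, to bound $|\chi|$: since $M_v(t)\in\{1,\dots,\Mbar+1\}$ and $N_v(t)\le M_v(t)$, the ratio $Y_v(t)=N_v(t)/M_v(t)$ lies in $\{0,1\}\cup\{k/m : 2\le m\le\Mbar+1,\ 1\le k\le m-1\}$, and so $|\chi| \le 2+\sum_{m=2}^{\Mbar+1}(m-1) = 2+\Mbar(\Mbar+1)/2$. Substituting into the entropy-budget inequality yields the stated bound. The argument is bookkeeping throughout; the one delicate point is to run the potential on $\hat H$ (the quantity the algorithm actually tests against $\epsilon/2$) rather than on the true $H$, and to avoid double-counting the endpoints $0$ and $1$ across different denominators in the support enumeration.
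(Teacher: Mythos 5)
Your proof is correct and follows essentially the same route as the paper's: a budget argument on the estimated directed conditional entropy, which starts at most $\log|\chi|$ and drops by more than $\epsilon/2$ per accepted node, combined with the support count $|\chi|\le \frac{\Mbar(\Mbar+1)}{2}+2$. Your explicit enumeration of the support of $Y_v(t)=N_v(t)/M_v(t)$ is a welcome detail the paper omits, but the argument is the same.
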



\begin{lemma}
\label{lem:3_Tmain}
If Assumption~\ref{assm:3_epsilon} holds and the number of time steps, $T \geq d + \frac{\left(\log(|V|)\cdot(\Pmax+1) +\log(2|\xi|/\gamma)\right)\cdot |\xi|^2}{(1-2(\Bar{\mu} + L) |\rho(\Tilde{A})|^{\frac{1}{d}})\delta^2}$
where $\delta=\frac{\epsilon^2}{8|\xi|}$, then $|\Hhat(v_+|v,P) - H(v_+|v,P)| \leq \frac{\epsilon}{4}$ with probability at least $1 - \gamma$ . 
\end{lemma}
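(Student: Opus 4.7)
The plan is to reduce entropy estimation to joint-probability estimation, concentrate the latter via a spectral-gap Markov Hoeffding inequality, and then bound the spectral gap of $\Yd(t)$ in terms of the parameters of the dynamics. First, I would write $\Hhat(v_+|v,P) = \Hhat(v_+,v,P) - \Hhat(v,P)$ and invoke a standard entropy-perturbation bound: on a support of size at most $|\xi|$, if $|\phat(y^*) - p(y^*)| \leq \delta$ uniformly in $y^*$, then each plug-in joint entropy differs from its true value by at most $O(|\xi|\delta \log(1/\delta))$. The choice $\delta = \epsilon^2/(8|\xi|)$ is calibrated so that both joint entropies end up within $\epsilon/8$, and hence the conditional entropy within $\epsilon/4$.

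Second, for a fixed outcome $y^*$ of the tuple $(Y_v(t+1),\Yd_{v,P}(t))$, the plug-in probability $\phat(y^*) = \frac{1}{T-d}\sum_{t=d}^{T-1}\mathbb{1}\{(Y_v(t+1),\Yd_{v,P}(t)) = y^*\}$ is a time average of a bounded functional of the Markov chain $\Yd(t)$ (Lemma~\ref{lem:1_Markov}). I would invoke the spectral Hoeffding inequality of \cite{cineq} to obtain a tail of the form $\prob(|\phat(y^*) - p(y^*)| > \delta) \leq 2\exp\bigl(-c\,\delta^2(T-d)(1-\lambda_2)\bigr)$, where $\lambda_2$ is the second-largest eigenvalue modulus of the transition kernel of $\Yd$. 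A standard coupling argument handles the mild loss from non-stationary initialization and justifies discarding the first $d$ burn-in steps.

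Third, and this is the crux, I would prove $\lambda_2 \leq 2(\Bar{\mu}+L)\,|\rho(\Tilde{A})|^{1/d}$ by sandwiching the total variation distance to the stationary distribution. On the lower side, spectral decomposition yields a bound of the form $\|\tm^n(\yd_0,\cdot) - \pi\|_{TV} \geq c_0\,\lambda_2^n$ for some constant $c_0>0$ depending only on $\pi$. On the upper side, I would unroll \eqref{eq:1_model} over $d$ steps so that $X_v(t+d)$ is expressed as a linear combination of the past fractions $N_u/M_u$ with coefficient matrices built from the $\Tilde{A}(r)$'s, and couple two copies of $\Yd$ started at different initial states: the Lipschitz constant $L$ of $\mu_v$ controls the discrepancy induced in the Poisson counts $M_v$, the rate cap $\Bar{\mu}$ controls the Bernoulli sampling layer, and the linear drift contracts per $d$ steps at the rate $|\rho(\Tilde{A})|$, i.e., $|\rho(\Tilde{A})|^{1/d}$ per step. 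Matching the resulting geometric upper bound with the spectral lower bound as $n\to\infty$ produces the stated eigenvalue estimate; the factor $2(\Bar{\mu}+L)$ comes out of the coupling of the observation layer.

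Finally, a union bound over all $v \in V$, all subsets $P \subseteq V$ of size at most $\Pmax$ (at most $|V|^{\Pmax+1}$ choices, by Lemma~\ref{lem:5_Pmax_rec}), and all $|\xi|$ joint outcomes costs an additional $\log|V|\cdot(\Pmax+1) + \log(2|\xi|/\gamma)$ in the exponent of the concentration bound, reproducing exactly the numerator in the sample-complexity expression. Substituting $1-\lambda_2 \geq 1 - 2(\Bar{\mu}+L)|\rho(\Tilde{A})|^{1/d}$ then solving for $T$ yields the claimed threshold. The main obstacle, as the authors themselves flag, is step three: unrolling the $d$-memory recursion while simultaneously tracking the Lipschitz Poisson rate and the Bernoulli observation noise, and making the coupling tight enough to match the spectral lower bound, is where the genuine novelty of the argument sits.
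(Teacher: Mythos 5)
Your proposal follows essentially the same route as the paper: reduce entropy error to $\ell_1$ probability error with $\delta=\epsilon^2/(8|\xi|)$, apply the spectral Markov--Hoeffding inequality of \cite{cineq} with union bounds over $v$, $P$, and outcomes, and bound the second eigenvalue by sandwiching the total variation distance between a spectral lower bound and a coupling-based upper bound that tracks the Lipschitz Poisson rates, the Bernoulli layer, and the $|\rho(\Tilde{A})|^{1/d}$ per-step contraction. The only cosmetic difference is that the paper applies the concentration inequality to the augmented chain $(\Yd(t),\Yd(t-1))$ so that the indicator of consecutive states is a function of a single Markov state, a detail your sketch leaves implicit.
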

Proof is given in section~\ref{sec:detailed_proofs}

\begin{proof}[Proof of Theorem~\ref{thm:2}:]
To recover the graph correctly, the RecGreedy($\epsilon$) algorithm requires the following conditions: a) If the greedy step encounters a neighbor $u \in \mathcal{N}_v$ of a node $v \in V$, it will be added. b) The last node added by the greedy step is always a neighbor. From Lemma~\ref{lem:tau_modified}, we see that these conditions are satisfied if the error in entropy estimation is limited to $\epsilon/4$. From Lemma~\ref{lem:3_Tmain}, we get the number of samples sufficient to limit the entropy estimation error to $\epsilon/4$ with probability at least $1 - \gamma.$
Using Lemma~\ref{lem:5_Pmax_rec}, we substitute the value of $\Pmax  \leq \frac{2\log(\frac{\Mbar(\Mbar+1)}{2}+2)}{\epsilon}+1$ in  Lemma~\ref{lem:3_Tmain} and obtain the sample complexity stated in Theorem~\ref{thm:2}.

Proof of computational complexity: For every node, the greedy step runs at most $\Pmax$ times. Therefore, the computational complexity is $\mathcal{O}(|V|)$.

\end{proof}


\subsection{Detailed Proofs}
\label{sec:detailed_proofs}

\begin{proof}[Proof of Lemma~\ref{lem:CondEntMarkov}]
$H(v_+|v,Q) - H(v_+|v,Q,u)$ can be written as 
\begin{align*}
    & H(v_+|v,Q) - H(v_+|v,Q,u) \nonumber \\
    & = \E_{v,Q,u,v_+} \left[\log \left( \frac{p(Y_{v_+}|\Yd_{v,Q,u})}{p(Y_{v_+}|\Yd_{v,Q})} \right) \right]
\end{align*}
Thus we see that $H(v_+|v,Q) - H(v_+|v,Q,u)$ is actually the expected K-L divergence between two distributions over $Y_{v_+}$, where expectation is taken with respect to $\Yd_v$, $\Yd_Q$ and $\Yd_u$, i.e.,
\begin{align*}
    & H(v_+|v,Q) - H(v_+|v,Q,u)\\
    & =\E_{u,v,Q}\left[D_{y_{v_+}}\left( {p(y_{v_+}|\yd_{v,Q,u})}||{p(y_{v_+}|\yd_{v,Q})} \right) \right]
\end{align*}
Using Pinsker's inequality \cite{pinskers}, for all $\yd_{v,Q,u}$,  
\begin{align}
    & \quad D_{y_{v_+}}\left( {p(y_{v_+}|\yd_{v,Q,u})}||{p(y_{v_+}|\yd_{v,Q})} \right)\nonumber \\
    & \geq 2\| p(y_{v_+}|\yd_{v,Q,u})-p(y_{v_+}|\yd_{v,Q})\|_1^2 \nonumber \\
    \end{align}
Therefore, we have 
\begin{align}
    & H(v_+|v,Q) - H(v_+|v,Q,u) \nonumber \\
    & =  \E_{v,Q} \E_{u}[D_{y_{v_+}}\left( {p(y_{v_+}|\yd_{v,Q,u})}||{p(y_{v_+}|\yd_{v,Q})} \right)| v, Q] \nonumber \\
    & \geq \E_{v,Q}   \E_{u}[\| p(y_{v_+}|\yd_{v,Q,u})-p(y_{v_+}|\yd_{v,Q})\|_1^2)| v, Q]. \nonumber \\
    & \geq \E_{v,Q} \big[ \epsilon'  \prob_{u}(\| p(y_{v_+}|\yd_{v,Q,u})-p(y_{v_+}|\yd_{v,Q})\|_1^2 > \epsilon' ) \nonumber \\ 
    & \quad \mid v, Q]\big]
\end{align}

By Assumption~\ref{assm:3_epsilon}, there exists a $y_u$ such that $\| p(y_{v_+}|\yd_{v,Q,u})-p(y_{v_+}|\yd_{v,Q})\|_1^2) > \epsilon'.$ For $M_u(t)=M$, we prove that the probability of such a $y_u$ is lower bounded i.e., we prove that $\inf_{y_u, y_v, y_Q} \prob(Y_u=y_u|y_{v,Q})$ is lower bounded, which would prove a lower bound on  $H(v_+|v,Q) - H(v_+|v,Q,u)$.

$\prob(Y_u=y_u|y_{v,Q})$ is a binomial distribution with parameter $M$ and $\E[X_u|y_{v,Q}]$. The minimum probability of any binomial with parameter $M$ and $a$ is $\min\{a^M,(1-a)^M\}$.
Now,

\begin{align*}
    X_u(t)=(1-\alpha_u)((1-\beta)Z_u(t) + \beta l_v) \nonumber \\ + \alpha_v\sum_{u \in \mathcal{N}_v \bigcup {v} }\sum_{r=0}^{(d-1)}a_{uv}^{(r)}\{\frac{N_v(t-1)}{M_v(t-1)}\}
\end{align*}

and hence, $a=\inf_{y_{v,Q}}\E[X_u|y_{v,Q}]=(1-\alpha_u)((1-\beta) z_u + \beta l_v)+\alpha_u \theta_{u}$, where $0\leq \theta_u \leq 1$. Note that since $0<z_u<1$, $0<\alpha<1$ we have $0<a<1$ and hence,
$\inf_{y_u, y_v, y_Q} \prob(Y_u=y_u|y_{v,Q}) = \min \{a^M,(1-a)^M\}$.

Thus under Assumption~\ref{assm:3_epsilon}, $H(v_+|v,Q) - H(v_+|v,Q,u) >\epsilon := \epsilon'\cdot \min \{a^M,(1-a)^M\}$ when $u$ is a neighbor of $v$.

Also by Lemma ~\ref{lem:1_Markov}, if $Q$ contains $\mathcal{N}_v$ and $u \notin \mathcal{N}_v$, $p(y_{v_+}|\yd_{v,Q,u})=p(y_{v_+}|\yd_{v,Q})$ for all $(y_{v_+},\yd_{v,Q,u})$. Thus $H(v_+|v,Q) - H(v_+|v,Q,u)=0$ when $Q$ contains the neighborhood. 
    
\end{proof}

\begin{proof}[Proof of Lemma \ref{lem:tau_modified}]

The proof follows on similar lines as \cite{Ray}. If $|\Hhat(v_+|v,Q) - H(v_+|v,Q)| < \epsilon''$ for $v \in V \text{ and } Q \subseteq V \text{ such that } |Q| < |P|_{max}$, then $$ H(v_+|v,Q)-\epsilon''  < \Hhat(v_+|v,Q) < \epsilon'' + H(v_+|v,Q)$$ Similar condition holds for $\Hhat(v_+|v,Q,u), u \in V.$ When we combine these two conditions and apply Lemma \ref{lem:CondEntMarkov}, we conclude that $\Hhat(v_+|v,Q) - \Hhat(v_+|v,Q,u)$ is greater than $\epsilon - 2\epsilon''$ if $u$ is a neighbor and is less than $2\epsilon''$ if $\mathcal{N}_v \subseteq Q \text{ and } u \notin \mathcal{N}_v$.

Substituting $\epsilon'' = \epsilon/4$ proves Lemma \ref{lem:tau_modified}. 
\end{proof}

\begin{proof}[Proof of Lemma~\ref{lem:5_Pmax_rec}] 
In the context of this work, the algorithm starts with an initial entropy of $\Hhat(v_+|v)$, which is reduced by $\epsilon/2$ every time a neighbor is added. The maximum number of nodes that can be estimated as neighbors is (similar to \cite{Ray}) $\frac{2\Hhat(v_+|v)}{\epsilon}$. Therefore, the maximum number of steps $|P|_{max}$ is $\frac{2\Hhat(v_+|v)}{\epsilon}+1$. We know that $\Hhat(v_+|v) \leq \log|\chi|$ where $|\chi|$ is the support of $y_{v_+}$. Here, $|\chi| \leq (\frac{\Mbar(\Mbar+1)}{2}+2)$.Thus, $|P|_{max} \leq \frac{2\log(\frac{\Mbar(\Mbar+1)}{2}+2)}{\epsilon}+1$.
\end{proof}

Next, in order to prove Lemma~\ref{lem:3_Tmain}, we first state a few results in Lemmas~\ref{lem:7_infoth}-\ref{lem:TV_upperbound}. 
\begin{lemma}
    \label{lem:7_infoth}
    $|\Hhat(v_+|v,P) - H(v_+|v,P)| < \epsilon/4$ if $\|\phat(y_{v_+},\yd_{v,P})-p(y_{v_+},\yd_{v,P})\|_1 < \delta$, where $\phat(\cdot)$ is the estimated probability distribution, $p(\cdot)$ is the true probability distribution and $\delta = \frac{\epsilon^2}{8|\xi|}$
\end{lemma}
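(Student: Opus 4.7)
The plan is to reduce the conditional entropy estimation error to a pair of joint entropy estimation errors and then invoke a standard continuity-of-entropy bound. Concretely, I would write
\[
H(v_+\mid v,P) = H(v_+,v,P) - H(v,P), \qquad \hat H(v_+\mid v,P) = \hat H(v_+,v,P) - \hat H(v,P),
\]
so that by the triangle inequality
\[
|\hat H(v_+\mid v,P) - H(v_+\mid v,P)| \leq |\hat H(v_+,v,P) - H(v_+,v,P)| + |\hat H(v,P) - H(v,P)|.
\]
It therefore suffices to bound each of the two joint entropy errors by $\epsilon/8$.

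For the second term, I would use the fact that marginalization is a contraction for the $\ell_1$ distance between distributions: summing out the $y_{v_+}$ coordinate gives $\|\hat p(\yd_{v,P}) - p(\yd_{v,P})\|_1 \leq \|\hat p(y_{v_+},\yd_{v,P}) - p(y_{v_+},\yd_{v,P})\|_1 < \delta$. Thus both the joint distribution on the support of size $|\xi|$ and its marginal are within $\ell_1$ distance $\delta$ of their estimates, and the problem reduces to a purely analytic continuity statement about the Shannon entropy of distributions on a finite alphabet of size at most $|\xi|$.

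For the continuity step I would invoke a Fannes/Audenaert-type inequality: for two distributions $p,q$ on an alphabet of size $K$ with $\|p-q\|_1 \leq \eta \leq 1/2$,
\[
|H(p) - H(q)| \leq \tfrac{\eta}{2}\log(K-1) + h_2(\tfrac{\eta}{2}),
\]
or the equivalent simpler bound $|H(p)-H(q)| \leq \eta \log(K/\eta)$ that follows from the concavity of $x\mapsto -x\log x$. Plugging in $\eta = \delta = \epsilon^2/(8|\xi|)$ and $K \leq |\xi|$, a direct calculation gives each entropy difference bounded above by $\epsilon/8$, so that the two contributions sum to at most $\epsilon/4$, as required.

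The only nontrivial step is choosing the continuity bound that is tight enough to absorb the factor $|\xi|$ hidden in $\delta$. Everything else is bookkeeping: the decomposition of conditional entropy, the contraction of $\ell_1$ under marginalization, and a one-line invocation of the Fannes-type inequality. I expect the bulk of the writeup to be verifying that $\delta = \epsilon^2/(8|\xi|)$ is indeed small enough for the continuity bound to yield $\epsilon/8$ per term, which is a routine computation given the explicit form of $\delta$.
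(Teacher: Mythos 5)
Your plan is essentially the paper's proof: the paper likewise writes $\hat H(v_+\mid v,P)=\hat H(v_+,v,P)-\hat H(v,P)$, applies the continuity-of-entropy bound $|\hat H - H|\le -\|\hat p-p\|_1\log\bigl(\|\hat p-p\|_1/|\xi|\bigr)$ to the joint (and, implicitly, the marginal — you make the $\ell_1$-contraction under marginalization explicit, which is a small improvement), and then substitutes $\delta=\epsilon^2/(8|\xi|)$; the only difference is that the paper further relaxes the bound to $\sqrt{\delta|\xi|}$ per term before summing. One caveat on your "routine computation": the bound $\delta\log(|\xi|/\delta)\le\epsilon/8$ is not an identity but holds only when $\epsilon$ is not too large relative to $|\xi|$ — though the paper's own bookkeeping is no tighter, since its stated $2\sqrt{\delta|\xi|}$ actually evaluates to $\epsilon/\sqrt{2}$ rather than $\epsilon/4$ for this choice of $\delta$.
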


 \begin{proof} The entropy estimation error depends on the error in the estimation of probability as \cite{infotheory} 
\begin{equation}
 \begin{split}
        |\Hhat(v_+,v,P)-H(v_+,v,P)| 
        \leq  -\|\phat(y_{v_+},\yd_{v,P})- \\ p(y_{v_+},\yd_{v,P})\|_1
        \times \log\frac{\|\phat(y_{v_+},\yd_{v,P})-p(y_{v_+},\yd_{v,P})\|_1}{|\xi|}
  \end{split}
\end{equation}

If $\|\phat(y_{v_+},\yd_{v,P})-p(y_{v_+},\yd_{v,P})\|_1 \leq \delta$, then $|\Hhat(v_+,v,P)-H(v_+,v,P)| \leq \sqrt{\delta |\xi|}$. Since the conditional entropy is computed from the estimated joint entropy as $\Hhat(v_+|v,P) = \Hhat(v_+,v,P) - \Hhat(v,P)$, we have 
$|\Hhat(v_+|v,P) - H(v_+|v,P)| \leq 2\sqrt{\delta|\xi|}= \epsilon/4$.
\end{proof}

Next, let $\lambda^*$ denote the second largest eigenvalue of the Markov chain $\{\Yd(t)\}$. Lemma~\ref{lem:8_tp} states the sample complexity in terms of $\lambda^*$ using the Markov Hoeffding inequality from \cite{cineq}. Later, in Lemmas~\ref{lem:9_lambda} and \ref{lem:TV_upperbound}, we upper bound the absolute value of $\lambda^*$.
\begin{lemma}
    \label{lem:8_tp}
     If $|\lambda|^*<1$ and the number of data points $T$ available for probability estimation is greater than $d+\frac{\left(\log(|V|)\cdot(\Pmax+1) +\log(2|\xi|/\gamma)\right)\cdot |\xi|^2}{(1-|\lambda^*|)\delta^2}$, then $\|\phat(y_{v_+},\yd_{v,P})-p(y_{v_+},\yd_{v,P})\|_1 < \delta$ with probability at least $1-\gamma$.
\end{lemma}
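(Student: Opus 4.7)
The plan is to reduce the $\ell_1$ statement to pointwise concentration, apply the Markov--Hoeffding tail bound from \cite{cineq} to each entry of the empirical joint distribution, and then close the argument by a union bound over the configurations the algorithm ever queries. Since the joint random variable $(y_{v_+},\yd_{v,P})$ takes at most $|\xi|$ values, $\|\phat(y_{v_+},\yd_{v,P}) - p(y_{v_+},\yd_{v,P})\|_1$ is a sum of $|\xi|$ pointwise absolute deviations, so it suffices to show that each one is below $\delta/|\xi|$.

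For any fixed target value $(y^{*}_{v_+},\yd^{*}_{v,P})$, I would write the empirical probability as the time average
\begin{equation*}
\phat(y^{*}_{v_+},\yd^{*}_{v,P}) \;=\; \frac{1}{T-d}\sum_{t=d}^{T-1} \mathbb{I}\{\Yd_v(t+1)=y^{*}_{v_+},\;\Yd_{v,P}(t)=\yd^{*}_{v,P}\}.
\end{equation*}
The summand is a function of two consecutive states of the Markov chain $\{\Yd(t)\}$ taking values in $[0,1]$, so by considering the enlarged chain of consecutive pairs (whose second eigenvalue magnitude coincides with $|\lambda^*|$) one obtains a bounded additive functional of a Markov chain with spectral gap $1-|\lambda^*|$. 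The Markov--Hoeffding inequality of \cite{cineq} then yields, for each fixed entry,
\begin{equation*}
\prob\left(|\phat - p| > \frac{\delta}{|\xi|}\right) \;\leq\; 2\exp\left(-\frac{(T-d)(1-|\lambda^*|)\delta^2}{|\xi|^2}\right).
\end{equation*}

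The union bound is next. The algorithm ever queries at most $|V|$ target nodes, at most $|V|^{\Pmax}$ conditioning subsets of size up to $\Pmax$, and for each such pair at most $|\xi|$ entries of the joint distribution, giving at most $|V|^{\Pmax+1}|\xi|$ bad events in total. Requiring the sum of their failure probabilities to be at most $\gamma$, taking logarithms, and rearranging yields exactly the threshold $T \geq d + \frac{(\log|V|\cdot(\Pmax+1)+\log(2|\xi|/\gamma))\cdot|\xi|^2}{(1-|\lambda^*|)\delta^2}$ stated in the lemma.

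The main technical obstacle is the careful invocation of the Markov--Hoeffding bound in \cite{cineq}. That inequality is cleanest under a stationary start, and it requires a bound on the \emph{absolute} value of the second eigenvalue; any handling of a non-stationary initial distribution would introduce a burn-in correction that depends on the distance of the initial law from the stationary distribution. Under the stationarity assumption implicit in the paper, the only extra care is that $\Yd(t)$ is well-defined as a $d$-tuple only once $t\geq d-1$, which is exactly what the additive offset $d$ in the sample-complexity statement absorbs. Verifying (or appealing to) aperiodicity and irreducibility of $\Yd(t)$, which follow from the noise $Z_v(t)$ having nontrivial support on $[0,1]$, is the remaining bookkeeping step before the cited inequality can be applied verbatim.
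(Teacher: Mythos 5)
Your proposal matches the paper's proof essentially step for step: the paper likewise lifts to the pair chain $\mathbf{Z}(t)=(\Yd(t),\Yd(t-1))$, applies the Markov--Hoeffding inequality of \cite{cineq} to the indicator time-average for each fixed $(y_{v_+},\yd_{v,P})$, and then union-bounds over the $|\xi|$ entries and over the at most $|V|\binom{|V|}{\Pmax}\le |V|^{\Pmax+1}$ choices of $(v,P)$ before solving for $T$. The only cosmetic difference is that the paper retains the $2/(1+|\lambda^*|)\ge 1$ factor in the exponent and drops it at the end, which yields the same stated threshold.
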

\begin{proof} Consider the Markov chain $\mathbf{Z}(t)=(\Yd(t),\Yd(t-1))$. We define a function $f(\mathbf{Z}(t)) = \frac{\sum_{t=d}^{T-1} \mathbb{I}\left((Y_{v}(t),\Yd_{v,P}(t-1))=(y_{v_+},\yd_{v,P})\right)}{T-d},$ where $\mathbb{I}$ is the indicator function. Using the concentration inequality stated in \cite{cineq} for $\mathbf{Z}(t)$ and $f$, we get
\begin{align*}
&~~~\prob(|f(\mathbf{Z}(t)) - \mathbb{E}f(\mathbf{Z}(t))|>\delta_1) \\
& \leq 2\exp\left({-\frac{(1-|\lambda^*|)2(T-d)^2\delta_1^2}{(1+|\lambda^*|)(T-d)}}\right)
\end{align*}
By applying union bound on $(y_{v_+},\yd_{v,P})$, we have
\begin{align*}
&~~ \prob(\|\phat(y_{v_+},\yd_{v,P})-p(y_{v_+},\yd_{v,P})\|_1>\delta) \\ & \leq 2|\xi|\exp\left({-\frac{(1-|\lambda^*|)2(T-d)(\delta/|\xi|)^2}{(1+|\lambda^*|)}}\right),
\end{align*}
which implies
\begin{align*}
&\prob(\|\phat(y_{v_+},\yd_{v,P})-p(y_{v_+},\yd_{v,P})\|_1>\delta) \leq \\
& 2|\xi|\exp\left({-\frac{2(1-|\lambda^*|)(T-d)\delta^2}{(1+|\lambda^*|).(|\xi|^2)}}\right)
\end{align*}

Applying union bound on $v$ and $P$,
\begin{align*}
& \prob(\|\phat(y_{v_+},\yd_{v,P})-p(y_{v_+},\yd_{v,P})\|_1>\delta) \leq \\ 
& 2|V| {{|V|} \choose{\Pmax}} |\xi|\exp\left({-\frac{2(1-|\lambda^*|)(T-d)\delta^2}{(1+|\lambda^*|)(|\xi|^2)}}\right)
\end{align*}
This probability is less than or equal to $\gamma$ when 
\begin{align*}
\gamma \geq 2|V| \cdot  {{|V|} \choose{\Pmax}} |\xi| \exp\left({-\frac{2(1-|\lambda^*|)(T-d)\delta^2}{(1+|\lambda^*|)(|\xi|^2)}}\right)
\end{align*}
\begin{align*}
\implies & \gamma \geq 2|V|\cdot |V|^{\Pmax} \\& ~~~\times |\xi|\exp\left({-\frac{2(1-|\lambda^*|)(T-d)\delta^2}{(1+|\lambda^*|)|\xi|^2}}\right)
\end{align*}

 Therefore, 
\begin{align*}
 T \geq d + \frac{(|\xi|^2)(\log(|V|)(\Pmax+1) +\log(\frac{2|\xi|}{\gamma}) )}{(1-|\lambda^*|)\delta^2}
\end{align*}
\end{proof}

\begin{lemma}
    \label{lem:9_lambda}
    $2c|\lambda^*|^t \leq \frac{1}{2} \max \limits_{\yd_0} \|\mathcal{P}^t(\yd_0,.)-\bm{\pi}\|_1$, where  $\mathcal{P}$ is the transition matrix, $\mathcal{P}^t(\yd_0,.)=\prob(\Yd(t)|\Yd(0)=\yd_0)$, $\bm{\pi}$ is the stationary distribution and $c$ is a constant independent of $t$ and $|V|$.
\end{lemma}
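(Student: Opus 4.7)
The plan is to exhibit a non-stationary test function that contracts at exactly the rate $\lambda^*$ under $\mathcal{P}$, and use it as a witness to lower-bound the total variation distance from stationarity. Let $v$ be a right eigenvector of $\mathcal{P}$ associated with the eigenvalue $\lambda^*$ of second-largest absolute value, so that $\mathcal{P} v = \lambda^* v$ and hence $\mathcal{P}^t v = (\lambda^*)^t v$ for every $t$.

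First, I would establish the orthogonality $\sum_y \pi(y) v(y) = 0$ purely algebraically: from $\pi \mathcal{P} = \pi$ (stationarity) and $\mathcal{P} v = \lambda^* v$, the scalar $\pi v$ satisfies $\pi v = \pi \mathcal{P} v = \lambda^* \pi v$, which forces $\pi v = 0$ since $|\lambda^*| < 1$. This mean-zero property of $v$ is what will convert the eigenvalue identity into a statement about the signed measure $\mathcal{P}^t(\yd_0,\cdot) - \bm{\pi}$.

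Next, I would write the key identity
\[
(\lambda^*)^t\, v(\yd_0) \;=\; (\mathcal{P}^t v)(\yd_0) \;=\; \sum_y \bigl(\mathcal{P}^t(\yd_0,y) - \pi(y)\bigr)\, v(y),
\]
where the last equality uses the orthogonality above. Applying H\"older's inequality in its $L^1$--$L^\infty$ form yields $|\lambda^*|^t \, |v(\yd_0)| \leq \|v\|_\infty \, \|\mathcal{P}^t(\yd_0,\cdot) - \bm{\pi}\|_1$. Choosing $\yd_0^\star$ so that $|v(\yd_0^\star)| = \|v\|_\infty$ and dividing both sides by $\|v\|_\infty$ then gives $|\lambda^*|^t \leq \max_{\yd_0} \|\mathcal{P}^t(\yd_0,\cdot) - \bm{\pi}\|_1$, which delivers the claimed inequality for any $c \leq 1/4$.

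The main technical subtlety is that $\lambda^*$ may be complex (so $v$ is complex-valued) or that $\mathcal{P}$ may be non-diagonalizable. The complex case causes no real trouble, because $\mathcal{P}^t - \bm{\pi}$ has real entries and H\"older's inequality applies termwise with the modulus $|v(y)|$, so the same chain of estimates survives verbatim for complex $v$. The defective case would require working with a generalized eigenvector and absorbing a polynomial prefactor $t^{k-1}$ into the constant; this does not alter the exponential rate $|\lambda^*|^t$ but could in principle make $c$ depend on the Jordan-block structure, and the hardest point in completing the argument is to show that this structural constant stays uniform in $|V|$ for the particular stochastic matrix induced by the dynamics in \eqref{eq:1_model}.
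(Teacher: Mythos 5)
Your proof is correct, but it takes a genuinely different route from the paper's. The paper expands the full transition matrix spectrally, $\mathcal{P}^t = \mathbb{1}^T\bm{\pi} + \sum_{i\ge 2}\lambda_i^t \mathbf{w}_i\mathbf{z}_i^T$, factors out $|\lambda^*|^t$, and argues that for all sufficiently large $t$ the contribution of the remaining modes is exponentially smaller, so that the residual norm is at least $\frac{1}{2}\|\mathbf{e}_0\mathbf{w}^*\mathbf{z}^{*T}\|_1 =: c$; this yields the bound only for $t > t_1$, implicitly assumes diagonalizability and a strictly dominant second mode, and produces a constant $c$ tied to the eigenvector structure (whose claimed independence of $|V|$ is not really argued). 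Your argument instead uses a single right eigenvector of $\lambda^*$ as a test function: the orthogonality $\bm{\pi}v = 0$ converts $(\mathcal{P}^t v)(\mathbf{y}^{(d)}_0) = (\lambda^*)^t v(\mathbf{y}^{(d)}_0)$ into an integral against the signed measure $\mathcal{P}^t(\mathbf{y}^{(d)}_0,\cdot)-\bm{\pi}$, and H\"older at the maximizer of $|v|$ gives $|\lambda^*|^t \le \max_{\mathbf{y}^{(d)}_0}\|\mathcal{P}^t(\mathbf{y}^{(d)}_0,\cdot)-\bm{\pi}\|_1$ for \emph{every} $t$, with the explicit universal constant $c = 1/4$. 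This is the classical eigenvalue lower bound on mixing, and it is cleaner on every axis the paper's proof is fragile on: no diagonalizability, no asymptotics in $t$, no hidden dependence of $c$ on $|V|$, and complex $\lambda^*$ handled by taking moduli. One remark: your closing worry about the defective case is unnecessary for your own argument, since every eigenvalue (Jordan-defective or not) admits at least one genuine eigenvector, which is all your computation uses; generalized eigenvectors would only matter for a matching upper bound.
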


\begin{proof}

Consider the total variation distance  $\mbox{dist}(t):=\frac{1}{2} \max \limits_{\yd_0} \|\mathcal{P}^t(\yd_0,.)-\bm{\pi}\|_1$. 

\begin{align*}
\tm^t(\yd_0,.)-\bm{\pi} &= \mathbf{e}_{0}\left(\mathbbm{1}^T\bm{\pi} + \sum \limits_{i=2}^N \lambda_i^t \w_i\z_i^T\right) - \bm{\pi} \nonumber \\
&= \mathbf{e}_{0}\left(\sum \limits_{i=2}^N \lambda_i^t \w_i\z_i^T\right),
\end{align*}

where $\mathbf{e}_{0}$ is the initial state vector given the initial state is $\yd_0$, $\w_i$ and $\z_i$ are the right and left eigenvectors of $\tm$ and $N$ is the number of states. Now, 

\begin{align*}
    &\|\mathcal{P}^t(\yd_0,.)-\bm{\pi}\|_1 = \left\|\eo\left(\sum \limits_{i=2}^N \lambda_i^t \w_i\z_i^T\right)\right\|_1\nonumber \\
    &= |\lambda^*|^t \left\|\eo\left(\w^*\z^{*T} + \sum_{\substack{i=3}}^N  \left(\frac{\lambda_i}{\lambda^*}\right)^t \w_i\z_i^T\right)\right\|_1 \nonumber \\
\end{align*}

For all sufficiently large $t$, $\left\|\eo\sum_{\substack{i=1 \\ i \neq i^*}}^N  \left(\frac{\lambda_i}{\lambda^*}\right)^t \w_i\z_i^T\right\|_1$ is exponentially smaller than $\left\|\eo\w^*\z^{*T}\right\|$ since $\lambda_i < \lambda^*$ for each $i\geq 3$. Since $\left\|\eo\left(\w^*\z^{*T} + \sum_{\substack{i=3}}^N  \left(\frac{\lambda_i}{\lambda^*}\right)^t \w_i\z_i^T\right)\right\|_1$ is lower bounded by $\left\|\eo\w^*\z^{*T}\right\|_1 - \left\|\eo\sum_{\substack{i=1 \\ i \neq i^*}}^N  \left(\frac{\lambda_i}{\lambda^*}\right)^t \w_i\z_i^T\right\|_1$, we get that for all sufficiently large $t$, $\left\|\eo\left(\w^*\z^{*T} + \sum_{\substack{i=3}}^N  \left(\frac{\lambda_i}{\lambda^*}\right)^t \w_i\z_i^T\right)\right\|_1 \geq \frac{1}{2}\left\|\eo\w^*\z^{*T}\right\|_1=c$. Thus, for sufficiently large $t_1$ we get

\begin{equation}
\label{eq:d_t_lambda}
    \mbox{dist}(t) \geq 2c|\lambda^*|^t  \quad \text{for all } t>t_1
\end{equation}
\end{proof}

\begin{lemma}
    \label{lem:TV_upperbound}
     $\mbox{dist}(t) \le c' \left(2(\Bar{\mu} + L)|\rho(\Tilde{A})|^{\frac{1}{d}}\right)^t$ for some constant $c'$.
\end{lemma}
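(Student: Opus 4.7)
\textbf{Proof plan for Lemma~\ref{lem:TV_upperbound}.}
The strategy is a coupling argument followed by the analysis of a nonnegative $d$-step vector recurrence. Fix an initial state $\yd_0$, and construct a joint process in which the first chain $\Yd(t)$ starts from $\yd_0$ and the second chain $\Ydd(t)$ is started from the stationary law $\bm{\pi}$. By the standard coupling inequality,
\begin{equation*}
\mbox{dist}(t) \;\le\; \prob\bigl(\Yd(t)\ne \Ydd(t)\bigr) \;\le\; \sum_{v\in V}\sum_{r=0}^{d-1} p_v(t-r),
\end{equation*}
where $p_v(s):=\prob(Y_v(s)\ne Y_v'(s))$ in the coupled pair. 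It therefore suffices to show that the vector $\mathbf{p}(t) := (p_v(t))_{v\in V}$ decays geometrically at the claimed rate.

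The first main step is to derive a one-step recursion for $\mathbf{p}$. I couple the noise $Z_v(t)$ and the bias $l_v$ identically across the two chains so that they cancel in $X_v(t+1)-X'_v(t+1)$, giving $|X_v(t+1)-X'_v(t+1)|\le \alpha_v\sum_{u,r}a_{uv}^{(r)}|Y_u(t-r)-Y'_u(t-r)|$. I then apply a maximal coupling of $(N_v(t+1),M_v(t+1))$ conditional on $X_v(t+1)$. The conditional total-variation distance splits into an $L$-piece — from Lipschitzness of $\mu_v$ combined with $d_{TV}(\mathrm{Poi}(\lambda_1),\mathrm{Poi}(\lambda_2))\le|\lambda_1-\lambda_2|$ applied to the capped Poisson governing $M_v$ — and a $\Bar{\mu}$-piece from $d_{TV}(\mathrm{Bin}(m,p),\mathrm{Bin}(m,q))\le m|p-q|$ averaged over $m$, using $\mathbb{E}[M_v]\le \Bar{\mu}+1$. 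Since $Y_u,Y'_u\in[0,1]$ gives $\mathbb{E}|Y_u-Y'_u|\le p_u$, the resulting coordinatewise inequality is
\begin{equation*}
\mathbf{p}(t+1) \;\le\; (L+\Bar{\mu})\sum_{r=0}^{d-1}\Tilde{A}(r)\,\mathbf{p}(t-r).
\end{equation*}

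The hardest step is to extract the decay rate $2(L+\Bar{\mu})|\rho(\Tilde{A})|^{1/d}$ from this $d$-th order nonnegative recurrence. Stacking $\mathbf{q}(t):=(\mathbf{p}(t),\mathbf{p}(t-1),\ldots,\mathbf{p}(t-d+1))^{\top}$ turns it into a one-step inequality $\mathbf{q}(t+1)\le \Tilde{M}\mathbf{q}(t)$ for a nonnegative companion-style block matrix $\Tilde{M}$, whose asymptotic rate is governed by the dominant positive root of the matrix polynomial equation $\lambda^{d} I = (L+\Bar{\mu})\sum_{r=0}^{d-1}\Tilde{A}(r)\,\lambda^{d-1-r}$. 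Since every $\Tilde{A}(r)$ is entrywise nonnegative, I will use Perron--Frobenius theory to reduce this to the scalar inequality $\lambda^{d}\ge (L+\Bar{\mu})\rho(\Tilde{A})(\lambda^{d-1}+\lambda^{d-2}+\cdots+1)$, and then verify by direct substitution that $\lambda=2(L+\Bar{\mu})|\rho(\Tilde{A})|^{1/d}$ satisfies it whenever this quantity is below $1$ (which is precisely the hypothesis of Theorem~\ref{thm:2}). Plugging the resulting geometric decay of $\|\mathbf{p}(t)\|$ back into the coupling bound, and absorbing the $d|V|$ prefactor together with the initial data $\mathbf{p}(0),\ldots,\mathbf{p}(d-1)$ into one constant $c'$, yields the claimed inequality.

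The part I expect to require the most care is this final spectral step: the $d$ matrices $\Tilde{A}(r)$ need not share eigenvectors, so the reduction to a scalar characteristic equation must be done on a carefully chosen common nonnegative vector (or via a sub-multiplicative norm adapted to $\Tilde{A}$). The factor $2$ in front of $(L+\Bar{\mu})|\rho(\Tilde{A})|^{1/d}$ specifically comes from the estimate $1+\lambda+\cdots+\lambda^{d-1}\le 2$ for $\lambda<1$ arising in the geometric-series manipulation of the characteristic polynomial.
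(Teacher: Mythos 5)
Your coupling construction and the derivation of the componentwise recursion $\mathbf{p}(t+1)\le(L+\Bar{\mu})\sum_{r=0}^{d-1}\Tilde{A}(r)\,\mathbf{p}(t-r)$ match the paper's proof of Lemma~\ref{lem:TV_upperbound} essentially step for step (identical noise across the two chains, a monotone coupling of the capped Poissons yielding the $L$-piece via Lipschitzness, shared uniforms for the Bernoulli draws yielding the $\Bar{\mu}$-piece), so that part is sound, up to the minor point that $\E[M_v]\le\Bar{\mu}+1$ rather than $\Bar{\mu}$. The genuine gap is in the final spectral step --- exactly the step you flag as delicate. First, your stated source of the factor $2$ is incorrect: $1+\lambda+\cdots+\lambda^{d-1}\le 2$ does not hold for all $\lambda<1$ (the sum can be as large as $d$, and is at most $1/(1-\lambda)$, hence at most $2$ only when $\lambda\le 1/2$). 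Second, even granting that the sum is at most $2$, ``direct substitution'' of $\lambda=2(L+\Bar{\mu})|\rho(\Tilde{A})|^{1/d}$ into $\lambda^{d}\ge 2(L+\Bar{\mu})\rho(\Tilde{A})$ reduces to $\bigl(2(L+\Bar{\mu})\bigr)^{d-1}\ge 1$, which is not implied by the hypothesis $2(L+\Bar{\mu})|\rho(\Tilde{A})|^{1/d}<1$ of Theorem~\ref{thm:2}.

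The problem is not merely the verification: the route through the dominant characteristic root cannot deliver the claimed rate in general. Any nonnegative root of $\lambda^{d}=(L+\Bar{\mu})\rho(\Tilde{A})\,(\lambda^{d-1}+\cdots+\lambda+1)$ satisfies $\lambda^{d}\ge(L+\Bar{\mu})\rho(\Tilde{A})$ (keep only the constant term on the right), so the dominant root is at least $\bigl((L+\Bar{\mu})\rho(\Tilde{A})\bigr)^{1/d}$, which strictly exceeds $2(L+\Bar{\mu})|\rho(\Tilde{A})|^{1/d}$ whenever $L+\Bar{\mu}<2^{-d/(d-1)}$. Concretely, for $d=2$, $L+\Bar{\mu}=0.1$, $\rho(\Tilde{A})=0.25$, the hypothesis of Theorem~\ref{thm:2} holds since $2(L+\Bar{\mu})|\rho(\Tilde{A})|^{1/2}=0.1$, yet the dominant root of $\lambda^{2}=0.025(\lambda+1)$ is approximately $0.17$, so your companion-matrix bound decays strictly more slowly than the lemma asserts. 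The paper obtains the factor $2^{t}$ by a different mechanism: it expands the $d$-step recursion all the way down to the initial condition, bounds the number of terms in that expansion by $\kappa(k)\le 2^{k+d}$, and pulls out the lowest occurring matrix power $\Tilde{A}(r^{*})^{k/d}$; the $2$ is a term-counting constant, not a geometric-series constant. To repair your argument you would need either to adopt that expansion-counting step, or to add an assumption such as $L+\Bar{\mu}\ge 1/2$ under which your scalar characteristic inequality can actually be verified.
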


\begin{proof}
We know that 
\begin{align*}
 \mbox{dist}(t) \leq \Bar{ \mbox{dist}(t)}:= \frac{1}{2} \max \limits_{\yd_0, \ydd_0} \Big\| \prob(\Yd(t) | \Yd(0) =\yd_0) \nonumber \\ - \prob(\Yd(t) | \Yd(0) =\ydd_0) \Big\|_{1}
\end{align*} 
Now we consider a coupling of the Markov chain, the process $\{(\Yd(t),\Ydd(t))\}$ with the initial conditions $\Yd(0)=\yd(0), \Ydd(0)=\ydd(0)$. We generate these two processes as follows:

At time $t$, we only need to generate $\Y(t)$ and $\Ydash(t)$; the rest of $d-1$ terms are taken from $\Yd(t-1)$ and $\Ydd(t-1)$. To generate $\Y(t)$ and $\Ydash(t)$, we first generate $\X(t)$ and $\Xdash(t)$ from $\Yd(t-1)$ and $\Ydd(t-1)$ according to eq\eqref{eq:1_model}.

Now,  we denote introduce a random variable $R_v(t)$ to denote $\text{Poisson}(\min\{\mu_v(X_v(t)),\mu_v(X'_v(t))\})$ and $S_v(t)$ to denote $\text{Poisson}(\max\{\mu_v(X_v(t)),\mu_v(X'_v(t))\} - \min\{\mu_v(X_v(t)),\mu_v(X'_v(t))\}) .$ If the minimum among $\mu_v(X_v(t))$ and $\mu_v(X'_v(t))\}$ is $\mu_v(X_v(t))$, then generate $M_v(t) = \min \{R_v(t), \Mbar\} + 1 $ and $M'_v(t) = \min \{R_v(t) + S_v(t), \Mbar\} + 1$. 

Else, if If the minimum among $\mu_v(X_v(t))$ and $\mu_v(X'_v(t))\}$ is $\mu_v(X'_v(t))$, then generate $M'_v(t) = \min \{R_v(t), \Mbar\} + 1 $ and $M_v(t) = \min \{R_v(t) + S_v(t), \Mbar\} + 1$. 

To generate $\Yd, \Ydd$, we generate $\min \{M_v(t), M'_v(t)\}$ number of $\text{Unif}[0,1]$ random variables (the $i^{th}$ uniform random variable is denoted by $U_i$). Then compare each of these uniform random variables with $X_v(t)$ to generate observed binary variables for the process $\Yd(t)$ (if $X_v(t)$ is less than or equal to the uniform variable, it's a `1'). Then, compare the same uniform random variables with $X'_v(t)$ to generate the observed binary variables for $\Ydd(t).$ Here, $\min \{M_v(t), M'_v(t)\}$ uniform random variables are common to both processes. 
Then, generate the remaining binary variables using a new set of $\max \{M_v(t), M'_v(t)\} - \min \{M_v(t), M'_v(t)\}$ uniform random variables.

Now we will consider the distance $\Bar{d}(t)$ between these two processes. $$\Bar{ \mbox{dist}(t)} \leq \prob(\tau_{couple} > t | \init),$$
For ease of notation, from here on, we denote the initial condition event $\{\init\}$ by $\0$
where $\tau_{couple} := \min \{t:\Yd(t) = \Ydd(t)\}$.
\begin{align*}
    \prob(\tau_{couple} > t\mid\0) = \prob(\Yd(t) \neq \Ydd(t)\mid\0) 
\end{align*}
$\Yd(t)$ and $\Ydd(t)$ are not equal when $\Y_v(k) \neq \Ydash_v(k)$ for at least one $(k,v)$ such that $ k \in \{t,t-1,\hdots,t-d+1\}, \text{ and } v \in \{1,2,\hdots |V|\}$. Therefore, we apply the union bound on $k$ and $v$ to obtain eq(\ref{eq:prob_t_k}) 

\begin{align}
   & \prob(\tau_{couple} > t\mid\0) \nonumber \\
    &= \prob \Big( \bigcup\limits_{r=0}^{d-1}\{\Y(t-r) \neq \Ydash(t-r)\}\mid \nonumber  \0 \Big) \nonumber \\
    & \leq \sum\limits_{r=0}^{d-1} \prob( \Y(t-r) \neq \Ydash(t-r)\mid \nonumber \0) \nonumber \\
    &= \sum\limits_{k=t-d+1}^t \prob(\Y(k) \neq \Ydash(k)\mid \nonumber \0) \nonumber \\
    & \leq \sum \limits_{v=1}^{|V|} \sum\limits_{k=t-d+1}^t \prob(Y_v(k) \neq Y'_v(k)\mid \label{eq:prob_t_k} \0) 
\end{align}

The event $Y_v(k) \neq Y'_v(k)$ can be partitioned into events based on whether the total number of binary random variables is equal for the two processes. Therefore,
$\prob(Y_v(k) \neq Y'_v(k)\mid\0)$ can be written as 
\begin{align}
\label{eq:split_tcouple}
    & \prob(Y_v(k) \neq Y'_v(k)\mid\0) \nonumber \\ &= \prob(M_v(k) = M'_v(k),Y_v(k) \neq Y'_v(k) |\0) \nonumber \\ &+ \prob(M_v(k) \neq M'_v(k),Y_v(k) \neq Y'_v(k) |\0)
\end{align}

Now, we consider the first term of eq~\eqref{eq:split_tcouple}. The joint probability $\prob (M_v(k) = M'_v(k),Y_v(k) \neq Y'_v(k) | \cdot)$ is the same as $\prob (M_v(k) = M'_v(k),N_v(k) \neq N'_v(k) | \cdot)$, since $Y_v(k) = N_v(k) / M_v(k)$. Then, we apply the law of total probability with respect to the smallest disjoint events generated by $\{\Yd(k-1),\Ydd(k-1)\}$ to obtain $\E_{\Yd(k-1), \Ydd(k-1)} [\prob(N_v(k) \neq N'_v(k),M_v(k) = M'_v(k) \mid \Yd(k-1), \Ydd(k-1);\0)].$ This is an expectation over the distribution of $\{\Yd(k-1),\Ydd(k-1)\}$ given the initial condition $'\0'$. 

Also, any joint probability is less than or equal to conditional probability. Therefore, the first term of eq~\eqref{eq:split_tcouple} turns out to be less than or equal to $\E_{\Yd(k-1), \Ydd(k-1)} [\prob(N_v(k) \neq N'_v(k)\mid M_v(k) = M'_v(k) ;\Yd(k-1), \Ydd(k-1);\0)].$

\begin{align}
\label{eq:Expectation_k-1_N_v}
    &  \prob \big(M_v(k) = M'_v(k),Y_v(k) \neq Y'_v(k) |  \0 \big) \nonumber \\
    &  \leq \E_{\Yd(k-1), \Ydd(k-1)} [\prob(N_v(k) \neq N'_v(k) \nonumber \\
    & \quad \mid M_v(k) = M'_v(k);  \0;\Yd(k-1),\Ydd(k-1))]
\end{align}

Now, we analyze the term $\prob(N_v(k) \neq N'_v(k) \mid M_v(k) = M'_v(k); \0; \Yd(k-1),\Ydd(k-1)).$ Since  $M_v(k)=M'_v(k)$, all the uniform random variables are common to both processes. $N_v(k)$ and $N'_v(k)$ will not be equal when $X_v(k) \leq U_i < X'_v(k)$ or $X'_v(k) \leq U_i < X_v(k)$ for at least one out of the $M_v(k)$ uniform random variables $\{U_i\}_{i=1}^{M_v(k)}$. Therefore, we can apply union bound on the uniform random variables and use the law of total probability over $M_v(k).$ Further, the probability that a uniform random variable lies in $[X_v(k), X'_v(k))  \cup [X'_v(k),X_v(k))$ is given by $|X_v(k) - X'_v(k)|$. Finally, using the model eq~\eqref{eq:1_model}, we can write this expression in terms of $\Yd(k-1),\Ydd(k-1).$

\begin{align*}
    & \quad \prob(N_v(k) \neq N'_v(k) \mid \\ 
    & \quad M_v(k) = M'_v(k); \Yd(k-1),\Ydd(k-1);  \0) \\
    & \leq \E_{M_v(k)} \Big[M_v(k) \prob_U(\{U \in  [X_v(k), X'_v(k))  \cup [X'_v(k),X_v(k))  \nonumber \\
    & \quad \mid  M_v(k) = M'_v(k); \0; \Yd(k-1),\Ydd(k-1)) \Big] \nonumber \\
    & \leq \max \limits_{X_v(k)} \E_{M_v(k)}[M_v(k)] \lvert X_v(k)-X'_v(k) \rvert \nonumber \\
    &= \Bar{\mu} \left[ \alpha_v\left| \sum_{u \in \mathcal{N}_v \cup \{v\}} \mathbf{a}^{(d)}_{uv}\Yd(k-1)- \mathbf{a}^{(d)}_{uv}\Ydd(k-1) \right|\right]
\end{align*}

Thus, we have  
\begin{align}
    &\prob \big(M_v(k) = M'_v(k),Y_v(k) \neq Y'_v(k) | \0 \big) \nonumber \\
    &\leq \Bar{\mu}\E_{\Yd(k-1), \Ydd(k-1)}\Bigg[ \alpha_v\Big| \sum_{u \in \mathcal{N}_v \cup \{v\}} \mathbf{a}^{(d)}_{uv}\Yd(k-1) \nonumber \\
    &- \mathbf{a}^{(d)}_{uv}\Ydd(k-1) \Big| \mid \0 \Bigg]
\end{align}

Now we derive an expression for the second term of eq~\eqref{eq:split_tcouple}, i.e., $\prob(M_v(k) \neq M'_v(k),Y_v(k) \neq Y'_v(k) |\0)$ in terms of $\Yd(k-1)$ and $\Ydd(k-1)$.

\begin{align*}
    &\prob(M_v(k) \neq M'_v(k),Y_v(k) \neq Y'_v(k) |  \0) \\
    &\leq \prob(M_v(k) \neq M'_v(k),\mid Y_v(k) \neq Y'_v(k); \nonumber \\ 
    & \Yd(k-1), \Ydd(k-1);\0)\\
    & =\E_{\Yd(k-1), \Ydd(k-1)} [\prob(M_v(k) \neq M'_v(k)\mid \nonumber \\
 & Y_v(k) \neq Y'_v(k); \Yd(k-1), \Ydd(k-1); \0) ]
\end{align*}

Since we have used two Poisson random variables  $R_v(t) =\text{Poisson}(\max \{\mu_v(X_v(k), \mu_v(X'_v(k)\})$ and $S_v(t)=\text{Poisson}(\max \{\mu_v(X_v(k), \mu_v(X'_v(k)\}-\min \{\mu_v(X_v(k), \mu_v(X'_v(k)\})$, we have

\begin{align*}
    &~~~\prob(M_v(k) \neq M'_v(k)\mid Y_v(k) \neq Y'_v(k); \nonumber \\
    &~~~~~~\Yd(k-1), \Ydd(k-1);\0)\\
    & =\prob(S_v(t) \neq 0 \mid Y_v(k) \neq Y'_v(k); \nonumber \\ 
    & ~~~\quad \Yd(k-1), \Ydd(k-1);\0 )\\
    & \leq \E[ 1 - \exp{(-|\mu_v(X_v(k))-\mu_v(X'_v(k))|)} \vert \\
    & ~~~~\Yd(k-1), \Ydd(k-1),  \0]\\
    &\leq \E[|\mu_v(X_v(k))-\mu_v(X'_v(k))|]\vert \\
    & ~~~~\Yd(k-1), \Ydd(k-1),  \0]\\
    &\leq L\E[|X_v(k)-X'_v(k)| \mid \Yd(k-1), \Ydd(k-1) \text{ and }\0]\\
    & = L \left| \alpha_v\left(\sum_{u \in \mathcal{N}_v \cup \{v\}}  \mathbf{a}^{(d)}_{uv}\Yd(k-1)- \mathbf{a}^{(d)}_{uv}\Ydd(k-1)\right) \right|\\
\end{align*}

Therefore,

\begin{align*}
    & \prob(Y_v(k) \neq Y'_v(k)\mid\0)\\
    & \leq (\Bar{\mu} + L)\E_{\Yd(k-1), \Ydd(k-1)}\Bigg[\alpha_v\Big\vert \sum_{u \in \mathcal{N}_v \cup \{v\}} \mathbf{a}^{(d)}_{uv}\Yd(k-1) \nonumber \\
    & \quad - \mathbf{a}^{(d)}_{uv}\Ydd(k-1) \Big\rvert  \mid \0\Bigg]
\end{align*}

Since $0 \leq Y_v(k), Y'_v(k) \leq 1,\E[|Y_v(k)-Y'_v(k)|] \leq \prob(Y_v(k) \neq Y'_v(k))$. Therefore,

\begin{align}
\label{eq_kminus1}
    & \prob(Y_v(k) \neq Y'_v(k)\mid\0) \nonumber \\
     & =(\Bar{\mu} + L) 
     \sum\limits_u\sum\limits_{r=0}^{(d-1)}a_{uv}^{(r)}\prob(Y_u(k-1-r) \neq Y'_u(k-1-r) \nonumber \\ 
     & \mid  \0)
\end{align}

Now, let $\eta_v(k) := \prob(Y_v(k) \neq Y'_v(k)\mid \0)$. Also, define a vector $\eta(k) := [\eta_1(k), \eta_2(k), \hdots ,  \eta_{|V|}(k) ]^T$ 

From eq ~\eqref{eq_kminus1} we have the following element wise inequality for $\eta(k)$:
\begin{align}
    \label{eq:beta}
    \eta(k) &\leq (\Bar{\mu} + L)\sum \limits_{r=0}^{d-1}\Tilde{A}(r)\eta(k-1-r) 
\end{align}
 We first choose an $r$ for which $\rho(\Tilde{A}(r))$ is the maximum and denote it by $r^*$. Next, we expand eq~\eqref{eq:beta} until we reach the initial condition, i.e. $\eta(0)= \eta(-1)= \hdots \eta(-d+1)= \mathbbm{1}.$ This expansion gives an inequality with a sum of powers of  $\Tilde{A}$ multiplied with the initial probability vector $\mathbbm{1}.$ Let the number of such terms at time $k$ be $\kappa(k).$ We have $\kappa(k) = \kappa(k-1) + \kappa(k-2) + \hdots + \kappa(k-d) \text{ for } k>d$ and $\kappa(k) = \sum \limits_{l=0}^{k-2} 2^{l}(d-(k-2-l)) + (d-r+1), \text{ for } k \in \{1, 2, \hdots d\}.$
 It follows after some calculations that any $\kappa(k)$ satisfying the above recursion also satisfies $\kappa(k) \leq 2^{k+d}$.  Also, the lowest power of $\Tilde{A}(r)$ for any $r$ that appears is $\Tilde{A}(r)^{k/d}$  Therefore, we have

\begin{align*}
    \eta(k) &\leq (\Bar{\mu}+L)^k\cdot {2}^{k+d}\Tilde{A}(r^*)^{k/d} \mathbbm{1}
\end{align*}

From the union bound in eq \eqref{eq:prob_t_k} we have
\begin{align}
    \mbox{dist}(t) &\leq d (\Bar{\mu}+L)^t\cdot {2}^{t+d}\|\Tilde{A}(r^*)^{t/d} \mathbbm{1}\|_1 
\end{align}
Let us write $\Tilde{A}(r^*)$ in terms of its eigenvalues and eigenvectors as $\Tilde{A}(r^*)^t = \sum_i \lambda_{A_i}^t \w_{A_i} \z_{A_i}^T = \rho(\Tilde{A})^t\left(\w_A \z_A^T + \sum_{i=2}^{|V|} (\frac{\lambda_{A_i}}{\rho(\Tilde{A}})^t \w_{A_i} \z_{A_i}^T\right)$. As $\lambda_{A_i} < \rho(\Tilde{A})$, $\left\|\sum_{i=2}^{|V|} (\frac{\lambda_{A_i}}{\rho(\Tilde{A}})^t \w_{A_i} \z_{A_i}^T )\mathbbm{1}\right\|$ is exponentially smaller than $\|\w_{A_i} \z_{A_i} \mathbbm{1}\|$ Therefore,
\begin{align}
\label{eq:coupling_lambda}
    & \mbox{dist}(t) \leq d(\Bar{\mu}+L)^t\cdot {2}^{t+d}\|\Tilde{A}^{t/d} \mathbbm{1}\|_1 \nonumber \\
    & \leq d(\Bar{\mu}+L)^t\cdot {2}^{t+d} |\rho(\Tilde{A})|^{t/d} \nonumber \\ 
    & \quad \times \left\|\left( \w_A \z_A^T + \sum_{i=2}^{|V|} (\frac{\lambda_{A_i}}{\rho(\Tilde{A})})^{t/d} \w_{A_i} \z_{A_i}^T\right)\mathbbm{1} \right\|_1 \nonumber \\
    & \leq d(\Bar{\mu}+L)^t\cdot {2}^{t+d} |\rho(\Tilde{A})|^{t/d}  \nonumber \\ 
    & \quad \times \left( \| \w_A \z_A^T \mathbbm{1}\|_1 +\left\|_1\sum_{i=2}^{|V|} (\frac{\lambda_{A_i}}{\rho(\Tilde{A})})^{t/d} \w_{A_i} \z_{A_i}^T \mathbbm{1} \right\|_1 \right)\nonumber \\
    & \leq d(\Bar{\mu}+L)^t\cdot {2}^{t+d} |\rho(\Tilde{A})|^{t/d} \cdot 2 \| \w_A \z_A^T \mathbbm{1}\|_1 \nonumber \\
    & \text{for all $t>t_2$ where $t_2$ is sufficiently large} 
\end{align}
We get the final bound by choosing  $c'=2^{d+1} d \| \w_A \z_A^T \mathbbm{1}\|_1$.

\end{proof}

\begin{proof}[Proof of Lemma~\ref{lem:3_Tmain}]

 Combining eq\eqref{eq:d_t_lambda} and \eqref{eq:coupling_lambda}, we have 
 \begin{align*}
     &|\lambda|^* \leq (2c/c')^{(1/t)}\left(2(\Bar{\mu} + L)|\rho(\Tilde{A})|^{\frac{1}{d}}\right) \nonumber \\
     & \text{for all $t>\max\{t_1,t_2\}$} \nonumber \\
    & \therefore |\lambda|^* \leq  2(\Bar{\mu} + L)|\rho(\Tilde{A})|^{\frac{1}{d}}
 \end{align*} 
Next we substitute this upper bound on $|\lambda|^*$ in Lemma~\ref{lem:8_tp}, along with the appropriate value of $\delta$ (Lemma~\ref{lem:7_infoth}). This proves Lemma~\ref{lem:3_Tmain}.
\end{proof}

\section{Simulations}
\label{sec:NumericlResults}
Though our main contribution is analytical in nature, we perform multiple simulations to corroborate our analytical results. Interestingly, as is often the case, the simulations also lead to certain insights, which are analytically intractable.

\begin{figure*}[h]

\begin{subfloat}[\label{fig:d1m0}]
{\includegraphics[width=0.32\linewidth]{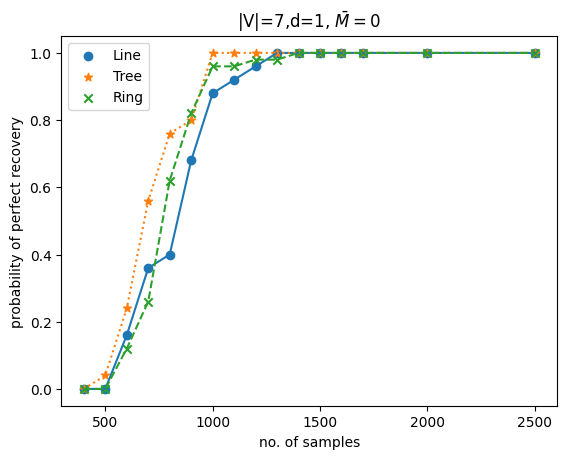}}
\end{subfloat}
\begin{subfloat}[]
    {\includegraphics[width=0.32\linewidth]{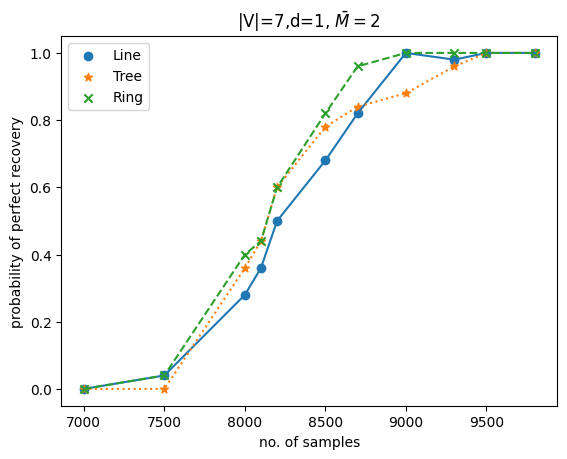}
    \label{fig:d1m2}}
\end{subfloat}
\begin{subfloat}[]
    {\includegraphics[width=0.32\linewidth]{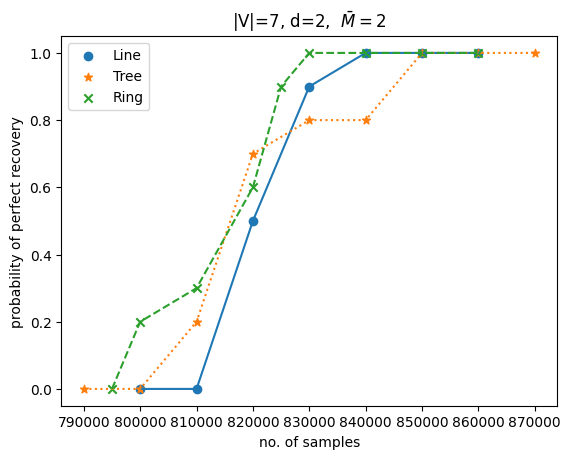}
    \label{fig:d2m2}}
\end{subfloat}
\caption{(a) Probability of perfect recovery for $|V|=7$,$d=1$, $\Mbar = 0$, (b)Probability of perfect recovery for $|V|=7,d=1$, $\Mbar=2$ and (c) Probability of perfect recovery for $|V|=7,d=2, \Mbar=2$ }
\end{figure*}

\begin{figure}[h]
    {\includegraphics[width=0.85\linewidth] {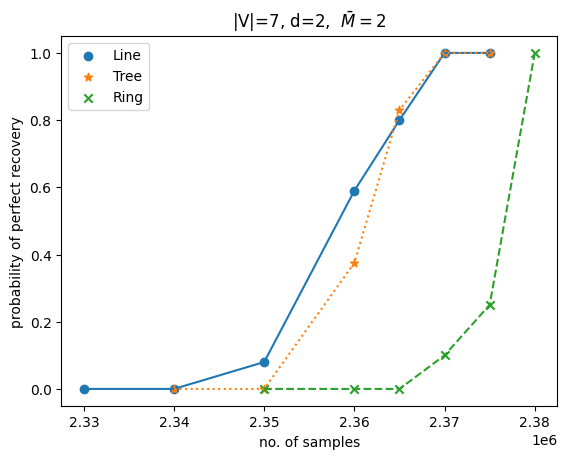}
    \caption{ Probability of perfect recovery for $|V|=7,\Mbar = 2, d=2$ when $2(\Bar{\mu}+L)|\rho(\Tilde{A})|^{\frac{1}{d}} > 1$} 
    \label{fig:d2m2outside}}
\end{figure}

We generate data using Eq.~\ref{eq:1_model} for various values of $d$ and $\Bar{M}$ and run Algorithm~\ref{alg_recgreedy} on that data. We chose the parameters of the model as follows:  for all nodes $v$, $\beta=0.75$, $l_v=0.167$, $\alpha_v = 0.4$, $Z_v(t) \sim \text{Ber}(0.5)$. We simulate multiple dynamics for different memory ($d$) and $\Bar{M}$, and three different causal or influence graph structures ($G$): line, tree, and ring. 

We run Algorithm~\ref{alg_recgreedy} on data from all those different instants of the dynamics, plot the dependence of the probability of graph recovery on the number of samples, and observe the influence of $d$ and $\Bar{M}$ on those. The threshold $\epsilon/2$ in Algorithm~\ref{alg_recgreedy} has to be chosen appropriately in each case.

In Fig.~\ref{fig:d1m0}, we plot the probability of graph recovery against the number of samples used for $d=1$, $\Bar{M}=0$ (i.e $M_v(t) = 1$ for all  $v,t$). We plot the same for  $d=1$, $\Bar{M}=2$ (i.e.,  $M_v(t) \leq 3$ for all  $v,t$) in Fig.~\ref{fig:d1m2} and for $d=2$, $\Bar{M}=2$ in Fig.~\ref{fig:d2m2}. 

Note that from Fig.~\ref{fig:d1m0} to Fig.~\ref{fig:d1m2}, $d$ was fixed ($d=1$) and $M$ increases from $0$ to $2$, where as from Fig.~\ref{fig:d1m2} to Fig.~\ref{fig:d2m2}, $\Bar{M}$ remains the same ($\Bar{M}=2$) and $d$ increases from $1$ to $2$. Interestingly, the number of samples required for almost sure recovery changes from around $1000$ to $9000$ from Fig.~\ref{fig:d1m0} to Fig.~\ref{fig:d1m2}. On the other hand, the number of required samples changes from $9000$ to $840000$ from Fig.~\ref{fig:d1m2} to Fig.~\ref{fig:d2m2}. In the second case, the increase in the number of required samples is substantially more. 

 In these simulations, since the degree of all the nodes in all the graphs is $1$, $|P|_{max}$ is actually a constant independent of $\Bar{M}$. Hence, in these cases, the analytical bound in Theorem~\ref{thm:2} depends on $\Bar{M}$ as $\Mbar^{2d}\log(\Mbar)$ through $|\xi|$. On the other hand, the analytical bound has a term that is exponential in $d$. Thus, the dependence of the constant terms (which are not dependent on $|V|$) of the bound on the parameters $d$ and $\Bar{M}$ corresponds well with the observations from simulations.

Note that the bound in Theorem~\ref{thm:2} is applicable when $2(\Bar{\mu} + L) |\rho(\tilde{A})|^{\frac{1}{d}}< 1$. All the previous simulations are done for a setting where that assumption is true. In Fig~\ref{fig:d2m2outside}, we plot the probability of success versus number of samples for $d=2$, $\Bar{M}=2$ when $2(\Bar{\mu} + L) |\rho(\tilde{A})|^{\frac{1}{d}}>1$. We observe that Algorithm~\ref{alg_recgreedy} can still recover the true graphs perfectly but needs around three times more samples than Fig.~\ref{fig:d2m2}. 

For the above scenario, i.e., $2(\Bar{\mu} + L) |\rho(\tilde{A})|^{\frac{1}{d}}>1$, we also observe the dependence of the number of required samples on $d$ and $\Bar{M}$. These are plotted in Tables \ref{table:d} and \ref{table:m}. Note that here also we see that the number of samples required for perfect recovery grows much faster with $d$ than with $\Bar{M}$ as in the analytical bound and as observed in Figures~\ref{fig:d1m0}--\ref{fig:d2m2}. This seems to indicate that the $O(\log |V|)$ bound on sample complexity is likely to be true even when $2(\Bar{\mu} + L) |\rho(\tilde{A})|^{\frac{1}{d}}>1$. We leave this for future analytical investigations.

\begin{table}[!]
\begin{center}
\caption{Relationship of $d$ to the number of samples required for perfect recovery}
\label{table:d}
\begin{tabular}{ |c|c |c|c|c| }
\hline 
 & $\Mbar$=0,d=1 & $\Mbar$=0, d=2 & $\Mbar$=0, d=3 & $\Mbar$=0, d=4 \Tstrut \\ 
 \hline 
  No. of samples & 3000 & 17000 & 48000 & 70000  \\
 \hline
\end{tabular}
\end{center}
\end{table}

\begin{table}[!]
\caption{Relationship of $\Mbar$ to the number of samples required for perfect recovery}
\label{table:m}
\begin{center}
\begin{tabular}{ |c|c |c|c|c| }
\hline  
 & d=1,$\Mbar$=0  & d=1,$\Mbar$=1 & d=1,$\Mbar$=2 & d=1,$\Mbar$=3 \Tstrut\\ 
\hline 
No. of samples & 3000  & 4000 & 12000 & 51000  \\
 \hline
\end{tabular}
\end{center}
\end{table}

\section{Conclusion}
\label{sec:conclusion}

We considered the problem of learning the influence or causal graph of a high-dimensional Markov process with memory. This problem is vital in many real-world applications, including social networks and nervous systems. We adapted the RecGreedy algorithm for learning i.i.d graphical models from \cite{Ray} by introducing directed conditional entropy as the metric for learning the neighborhood greedily. We proved a logarithmic sample complexity result by deriving a bound on the second eigenvalue of the Markov process with memory, which involved obtaining and comparing lower and upper bounds on the distance from the stationary distribution as a function of time. Unlike RecGreedy  \cite{Ray}, where the non-degeneracy of conditional entropy for the i.i.d graphical model is an unavoidable assumption, we presented a proof of non-degeneracy of the conditional directed entropy in our Markov model.  Simulations show that the adapted RecGreedy algorithm has a high probability of success even in cases where the analytical bound is not applicable.

\end{document}